\let\theoremstyle\relax
\theoremstyle{definition}
\newtheorem{definition}{Definition}
\newtheorem{theorem}{Theorem}
\newtheorem{lemma}{Lemma}
\newtheorem{assumption}{Assumption}
\newtheorem{corollary}{Corollary}
\newtheorem{problem}{Problem}
\newtheorem{remark}{Remark}
\begin{document}

\title{A Constraint-Driven Approach to Line Flocking: The V Formation as an Energy-Saving Strategy}
\author{Logan E. Beaver,
\IEEEmembership{Member, IEEE},
Christopher Kroninger, Michael Dorothy,\\
Andreas A. Malikopoulos,
\IEEEmembership{Senior Member, IEEE}
\thanks{L.E. Beaver is with the Division of Systems Engineering, Boston University, Boston, MA, USA (email: \tt\small{lebeaver@bu.edu})}
\thanks{C. Kroninger and M. Dorothy are with Combat Capabilities Development Command, Army Research Laboratory, MD, USA (email(s): \tt\small{christopher.m.kroninger.civ@army.mil, michael.r.dorothy.civ@army.mil})}
\thanks{A.A. Malikopoulos is with the Department of Mechanical Engineering, University of Delaware, Newark, DE, USA (email: \tt\small{andreas@udel.edu})}
}

\markboth{IEEE Transactions on Robotics,
Vol. 1, No. 1, January 2022}
{Beaver
\MakeLowercase{\textit{(et al.)}:
V Formation}
}

\maketitle

\begin{abstract}

The study of robotic flocking has received significant attention in the past twenty years.
In this article, we present a constraint-driven control algorithm that minimizes the energy consumption of individual agents and yields an emergent V formation.
As the formation emerges from the decentralized interaction between agents, our approach is robust to the spontaneous addition or removal of agents to the system.
First, we present an analytical model for the trailing upwash behind a fixed-wing UAV, and we derive the optimal air speed for trailing UAVs to maximize their travel endurance.
Next, we prove that simply flying at the optimal airspeed will never lead to emergent flocking behavior, and we propose a new decentralized ``anseroid" behavior that yields emergent V formations.
We encode these behaviors in a constraint-driven control algorithm that minimizes the locomotive power of each UAV.
Finally, we prove that UAVs initialized in an approximate V or echelon formation will converge under our proposed control law, and we demonstrate this emergence occurs in real-time in simulation and in physical experiments with a fleet of Crazyflie quadrotors.
\end{abstract}

\begin{IEEEkeywords}
Flocking, Multi-Agent Systems, Optimal Control, Organized Flight, UAVs.
\end{IEEEkeywords}

\section{Introduction}

Multi-agent systems have attracted considerable attention in many applications due to their natural parallelization, general adaptability, and ability to self-organize \cite{Oh2017}. Such systems have a nonclassical information structure \cite{Malikopoulos2021} and impose several challenges in learning and control \cite{Malikopoulos2022a}.
A recent push in constraint-driven control has brought the idea of long-duration autonomy to the forefront of multi-agent systems research \cite{Egerstedt2018RobotAutonomy}.
As the real-world deployment of robotic systems continues \cite{mahbub2020sae-1,Vasarhelyi2018OptimizedEnvironments} these robotic agents will be left to to interact with their environment on timescales significantly longer than what can be achieved in a laboratory setting.
In addition, constructing a swarm with a large number of robots imposes significant cost constraints on each individual.
For robotic swarms to become viable, we need to develop and employ energy-optimal control techniques under strict cost constraints.
Thus, any approach to long-duration autonomy must emphasize safe energy-minimizing control policies that are driven by interactions with the environment.

Generating emergent flocking behavior has been of particular interest since Reynolds proposed three heuristic rules for multi-agent flocking in computer animation \cite{Reynolds1987}.
In aerial systems, the main energy savings comes from upwash, i.e., trailing regions of upward momentum in the slipstream, which followers exploit to reduce induced drag. 
Flocking to minimize energy consumption is known as line flocking in the engineering literature \cite{Beaver2020AnFlockingb}, and its name comes from the linear formation-like flocking behavior of geese, pelicans, etc \cite{Bajec2009OrganizedBirds}.

The simplest method to achieve a V formation may be to generate an optimal set of formation points based on the aerodynamic characteristics of each agent.
This effectively transforms the line flocking problem into a formation reconfiguration problem, where each agent must assign itself to a unique goal and reach it within some fixed time \cite{Nathan2008V-likeBirds}. 
The physical effects of V formation flight were explored in recent article \cite{Mirzaeinia2019EnergyReconfiguration}, where the authors demonstrate that the leading and trailing agents consume energy at a significantly higher rate.
This implies that these agents are the limiting factor in the total distance traveled, and the authors propose a formation-reconfiguration scheme based on a load-balancing protocol.
However, formation reconfiguration approaches generally require the formation to be computed offline, and while some  articles consider agent heterogeneity (e.g., age, weight, size, and efficiency) \cite{Mirzaeinia2020AnPurposes}, this has not yet been explicitly considered in an engineering context.
Furthermore, formation points must be recalculated online if an agent enters or leaves the system, or if there are significant changes in the ambient environment.

A more flexible approach is to treat line flocking as a data-driven problem, where agents measure the local aerodynamic and hydrodynamic interactions to dynamically position themselves in the upwash field and save energy.
This has been achieved for aerial vehicles in $\mathbb{R}^2$ using a model predictive control approach \cite{Yang2016LoveControl}, where the authors construct a multi-objective optimization problem that minimizes speed differences, maximizes upwash benefit, and minimizes the field of view occlusion between agents.
The authors also demonstrate that solving this multi-objective optimization problem yields emergent V formations, even when the agents are initialized randomly.
A recent review of related optimal flocking techniques is presented in \cite{Beaver2020AnFlockingb}.

Our approach, in contrast to existing work, is constraint-driven.
In our framework, agents seek to travel as efficiently as possible subject to a set of task and safety constraints.
This set-theoretic approach to control is interpretable, i.e., the cause of an agent's action can be deduced by examining which constraints become active during operation.
By examining the conditions that lead to an empty feasible space, our framework also addresses the problem of constraint compatibility, i.e., how each agent ought to behave when its feasible action space is over-constrained.
Our approach is decentralized, and thus it is well-suited to ``open systems," where agents may suddenly enter, leave, or experience failure.

In this article, we describe a new \textit{anseroids} behavior (anserine-oid, meaning `goose-like') that generates dynamic echelon and V formations without any knowledge of the total number of agents in the system, and which are not given any information about the desired formation shape.
To the best of our knowledge, the only results similar to ours are these reported in \cite{Yang2016LoveControl,Roy2020LearningV-formation}.
The former uses particle swarm optimization combined with model predictive control to solve an optimal control problem. However, their approach depends on a multi-objective optimization problem with four components, and they provide no guarantees on the emergence of flocking behavior. 
The latter employs reinforcement learning to generate conditions that ensure a V formations does not occur.
The contributions of this article are as follows:
\begin{itemize}
    \item The first optimal control algorithm that demonstrates emergent V formations as a result of an energy-minimizing control policy,
    \item a physics-based flocking model where agent decisions are coupled through aerodynamic interactions,
    \item an interpretable set-theoretic control architecture that intuitively describes the optimal behavior of each agent in the flock, 
    \item a switching control policy that guarantees a solution to the constrained optimal control problem always exists, and
    \item compelling evidence that demonstrate how energy savings is enhanced when heterogeneity is introduced to the flock.
\end{itemize}

The remainder of the article is organized as follows.
In Section \ref{sec:problem}, we discuss our notation and present the dynamics of our problem.
We present our optimal control problem and guarantees on its behavior in Section \ref{sec:control}, and in Section \ref{sec:validation} we validate our results in simulation and experimentally.
In particular, we simulate $2$ UAVs in Section \ref{sec:matlab} to show the aerodynamic interactions between agents, and in Section \ref{sec:experiment} we simulate $11$ UAVs in CrazySwarm and validate our controller experimentally with $5$ drones.
Finally, we summarize our results and directions for future research in Section \ref{sec:conclusion}.

\section{Problem Formulation} \label{sec:problem}

\subsection{Note on Notation}

Most references on optimal control, e.g., \cite{Bryson1975AppliedControl,Ross2015}, consider centralized problems.
Thus, directly adopting their notation may lead to ambiguities about the state space of a decentralized problem.
To relieve this tension, we take the following approach for an agent with index $i$:
endogenous variables, e.g., the position of agent $i$, are written without an explicit dependence on time, while exogenous variables, e.g., the position of agent $j$ as measured by agent $i$, are written with an explicit dependence on time.
This notation is common in the applied mathematics literature \cite{Levine2011OnFlatness}, and makes it explicitly clear how functions evolve with respect to the state (e.g., state dynamics) and how they evolve with respect to time (e.g., external signals measured by the agent).

\subsection{System Dynamics}

We consider a fleet of $N\in\mathbb{N}, N \geq 3$ fixed-wing uncrewed aerial vehicles (UAVs) indexed by the set $\mathcal{A} = \{1, 2, \dots, N\}$.
We denote the state of each UAV $i\in\mathcal{A}$ by
\begin{equation}
    \bm{x}_i \coloneqq 
    \begin{bmatrix}
    \bm{p}_i \\ \theta_i
    \end{bmatrix},
\end{equation}
where $\bm{p}_i\in\mathbb{R}^2$ is the UAV's position and $\theta_i\in\mathbb{R}$ is the UAV's heading angle.
Each UAV obeys unicycle dynamics,
\begin{equation} \label{eq:dynamics}
\begin{aligned}
    \dot{\bm{p}}_i &= 
    \begin{bmatrix}
    v_i \cos{\theta} \\
    v_i \sin{\theta}
    \end{bmatrix}, \\
    \dot{\theta}_i &= \omega_i,
\end{aligned}
\end{equation}
where $v_i\in\mathbb{R}_{>0}$ and $\omega_i\in\mathbb{R}$ are the control variables, which correspond to the linear and angular speed of UAV $i$.
We impose the control bounds,
\begin{equation}
\begin{aligned} \label{eq:bounds}
    |\omega_i| &\leq \omega_{\max}, \\
    0 < v_{\min} \leq v_i &\leq v_{\max},
\end{aligned}
\end{equation}
where $\omega_{\max}$ is the maximum turning rate and $v_{\min} < v_{\max}$ correspond to the minimum and maximum air speed.

Finally, the total drag force acting on UAV $i$ has the form,
\begin{align} \label{eq:drag}
    F_i(\bm{x}_i, t) &= C_1 v_i^2 + \frac{C_2}{v_i^2} - \frac{L}{v_i} W(\bm{p}_i, t),
\end{align}
where $C_1, C_2 \in\mathbb{R}_{>0}$ capture the profile and self-induced drag that include the drag coefficient, air density, and wing area.
The function $W : \mathbb{R}^2\times\mathbb{R} \to \mathbb{R}$ describes the scalar upwash field, which we formally define in Section \ref{sec:control}.

In our modeling framework above, we impose the following assumptions.

\begin{assumption} \label{smp:tracking}
    Each UAV is equipped with a low-level flight controller that can track the sequence of control actions that we generate.
\end{assumption}
\begin{assumption} \label{smp:stillAir}
    The UAVs are operating in still air with constant aerodynamic properties.
\end{assumption}

We impose Assumptions \ref{smp:tracking} and \ref{smp:stillAir} to determine the behavior of the system in idealized conditions. However, these assumptions are not restrictive on our analysis.
In particular, applying adaptive and robust control techniques, such as control barrier functions \cite{Wang2017SafetySystems} or Gaussian Processes \cite{chalaki2021RobustGP}, can be used to overcome the resulting model mismatch.

\begin{assumption} \label{smp:collision}
    Collision avoidance between UAVs in $\mathcal{A}$ can be neglected.
\end{assumption}

Generally, V formations have significant space between individuals without opportunities for collisions between agents \cite{Mirzaeinia2019EnergyReconfiguration,Mirzaeinia2020AnPurposes}.
Thus, we impose Assumption \ref{smp:collision} to focus the scope of our work on the emergence of the V formation.
Furthermore, aerodynamic effects disincentive UAVs from approaching too closely, and collision avoidance can always be guaranteed by introducing pairwise collision avoidance constraints \cite{Beaver2020Energy-OptimalConstraints} or employing a control barrier function as a safety layer \cite{Wang2017SafetySystems}.

\begin{assumption} \label{smp:main}
    There exists a global heading angle $\theta_g$ and a small tolerance $\epsilon\in\mathbb{R}_{>0}$ such that $|\theta_i - \theta_g| < \epsilon$ for all UAVs $i\in\mathcal{A}$.
\end{assumption}

We impose Assumption \ref{smp:main} to simplify our analysis of the aerodynamics.
First, it allows us to consider spanwise cuts of the domain, which reduces our analysis from 2D to 1D.
Second, it allows us to model the wake as a scalar field centered on each UAV instead of modeling the wake evolution numerically, i.e., using computational fluid dynamics.
This assumption is common in the multi-UAV literature \cite{Yang2016LoveControl,Mirzaeinia2019EnergyReconfiguration}, although it is usually not stated explicitly.
We impose this assumption as a constraint in our final control algorithm, and it can be interpreted as the ``migratory urge'' proposed by Reynolds \cite{Reynolds1987}; the direction $\theta_g$ could also be computed using consensus, and some agents could simply separate themselves from the flock if Assumption \ref{smp:main} becomes too restrictive.

\subsection{Wake Model} \label{sec:aero}

Under Assumption \ref{smp:main}, we model the wake of each UAV $i\in\mathcal{A}$, as a scalar field centered at $\bm{p}_i$ and aligned with $\theta_i$.
Physically, the upwash field is a consequence of the pressure difference between the top and bottom of the wing \cite{Anderson2017FundamentalsAerodynamics}. 
This induces a vortex at the wing tips, which generates an upward velocity (i.e., upwash) far from the wing and downward velocity (i.e., downwash) behind the wing.
Classically the wingtip vortices have been modeled using irrotational flow \cite{Mirzaeinia2019EnergyReconfiguration,Anderson2017FundamentalsAerodynamics,Karamcheti1980PrinciplesAerodynamics}.
However, this model is known to cause nonphysical behavior.
In particular, the vertical air speed approaches infinity at the wing tips.
In this work we augment the irrotational vortex model with a rotational core, which drives the velocity to zero at the wing tips.
Namely, each vortex induces the upwash velocity,
\begin{equation} \label{eq:spanAero}
    u_i(r) = 
    \begin{cases}
    \frac{\Gamma}{2\pi r} & \text{ if } |r| \geq r^*, \\
    \Omega r & \text{ if } |r| \leq r^*,
    \end{cases}
\end{equation}
where $u_i$ is the vertical airspeed, $r$ is the distance to the vortex center, $\Gamma$ is the circulation of the irrotational vortex, $\Omega$ is the angular rotation speed of the rotational core, and
$r^*$ satisfies
\begin{equation}
    r^* = \Big(\frac{\Gamma}{2\pi\Omega}\Big)^{\frac{1}{2}}.
\end{equation}
Note that $r^* << b$ for physical systems, and under Assumption \ref{smp:main}, the induced velocity field has the form,
\begin{equation} \label{eq:spanwise}
    f(y) = u_i(y-b) - u_i(y+b),
\end{equation}
where $y$ is a relative spanwise position and $2b$ is the wingspan.
The upwash distribution is shown in Fig. \ref{fig:upwash}.

\begin{figure}[ht]
    \centering
    \includegraphics[width=\linewidth]{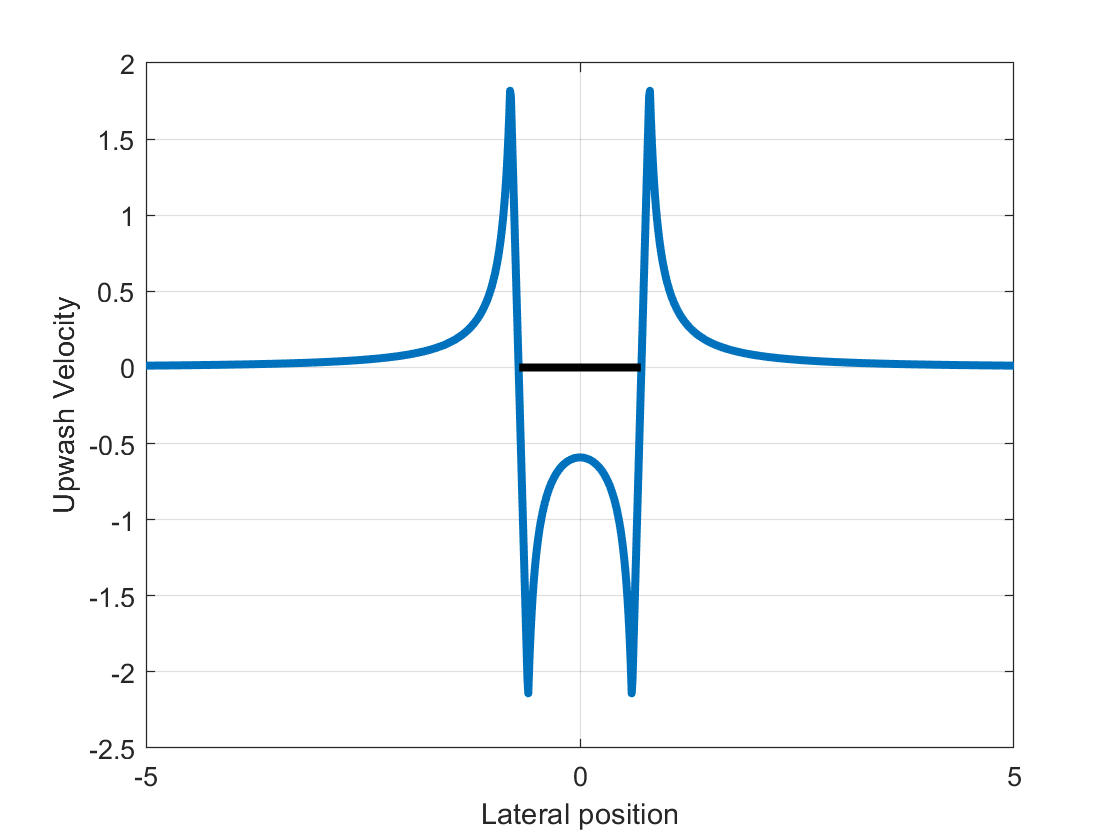}
    \caption{Upwash velocity induced in the spanwise direction due to the wing tip vortices.}
    \label{fig:upwash}
\end{figure}

In the streamwise, i.e., longitudinal direction, the wing interactions quickly coalesce into the two strong wingtip vortices.
As the vortices move aft from the wing, their influence approximately doubles.
Modeling the streamwise behavior of the vortices is a challenging problem; we adopt the approach of \cite{Yang2016LoveControl} using a gaussian function,
\begin{equation} \label{eq:streamwise}
    g(x) = 2 \exp{-\frac{(x-\mu)^2}{2\sigma^2}},
\end{equation}
where $\mu$ determines the location of maximum upwash benefit, $\sigma$ determines the length of the wake, and $x$ is the relative streamwise position,.
Finally, we define the relative position vector,
\begin{equation}
    \bm{s}_{ij}(\bm{p}_i, t) = \bm{p}_i - \bm{p}_j(t),
\end{equation}
and combine \eqref{eq:spanwise} and \eqref{eq:streamwise} to find an expression for the magnitude of UAV $j$'s upwash on $i$,
\begin{equation}
    w_i^j(\bm{p}_i, t) \coloneqq f\Big(\bm{s}_{ij}(\bm{p}_i, t)\cdot\hat{y}\Big)\cdot g\Big(\bm{s}_{ij}(\bm{p}_i, t)\cdot\hat{x}\Big),
\end{equation}
where $\hat{x}, \hat{y}$ are unit vectors aligned with and perpendicular to $\theta_{j}(t)$, respectively.

\section{Optimal Feedback Controller} \label{sec:control}

To simplify our analysis of the system-level behavior, we impose the following pairwise assumption on the UAV positions.
\begin{assumption} \label{smp:pairwise}
    For each UAV $i\in\mathcal{A}$, there is at most one UAV $j\neq i$ such that the upwash force $w_i^j(\bm{p}_i, t)$ is not negligible.
\end{assumption}
Intuitively, Assumption \ref{smp:pairwise} requires the UAVs to be sufficiently `close' to a V or echelon formation.
This makes our analysis tractable, as each UAV must only consider the influence of its immediate leader.
We extend the results of this section to the general case in Section \ref{sec:experiment}, where the UAVs are simulated in 2D space and the agents' initial conditions do not satisfy Assumption \ref{smp:pairwise}.
Finally, for completeness, we define the necessary conditions for a flock of UAVs to remain stable under Assumption \ref{smp:main}.

\begin{definition} \label{def:stable}
A necessary condition for the flock of UAVs is \emph{stable} at time $t_0$ under Assumption \ref{smp:main} is, for each UAV $i\in\mathcal{A}$,
\begin{equation}
    v_i(t) = v_i(t_0), \quad \theta_i(t) = \theta_j(t)
\end{equation}
for all $t \geq t_0$.
\end{definition}

\subsection{Drag Minimization}

As a first step, we seek the control input $v_i$ that minimizes the drag force on UAV $i\in\mathcal{A}$, i.e.,
\begin{equation} \label{eq:fDot}
    \frac{\partial F_i}{\partial v_i} = 2C_1 v_i - 2 \frac{C_2}{v_i^3} + \frac{L}{v_i^2} W_i(\bm{p}_i, t) = 0.
\end{equation}
Rearranging terms and multiplying by $v_i^3$ yields,
\begin{equation} \label{eq:vMinDrag}
    v_i^4 + \frac{L}{2 C_1} W_i(\bm{p}_i,t) v_i - \frac{C_2}{C_1} = 0.
\end{equation}

\begin{remark}
    Note that \eqref{eq:vMinDrag} minimizes the drag experienced by UAV $i\in\mathcal{A}$, which generally maximizes the distance traveled by the UAV per unit of energy expended.
    Alternatively, one could minimize the power lost to drag by considering the product of \eqref{eq:drag} and $v_i$. This generally maximizes the flight time of the UAV.
    The following analysis holds for both cases.
\end{remark}

\begin{lemma} \label{lma:existance}
    There is a unique real positive speed that minimizes the drag experienced by each UAV $i\in\mathcal{A}$.
\end{lemma}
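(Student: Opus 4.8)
The plan is to work directly with the stationarity condition \eqref{eq:vMinDrag}, which is equivalent to $\partial F_i/\partial v_i = 0$ after multiplying \eqref{eq:fDot} by the strictly positive factor $v_i^3/(2C_1)$. Since $v_i > 0$ on the relevant domain, this manipulation neither creates nor destroys roots, so the positive stationary points of $F_i$ are in exact correspondence with the positive real roots of the quartic
\begin{equation}
    P(v) \coloneqq v^4 + \tfrac{L}{2C_1} W_i(\bm{p}_i,t)\, v - \tfrac{C_2}{C_1}.
\end{equation}
Thus the problem reduces to counting the positive roots of $P$ and then checking that the root we locate is a minimizer rather than a maximizer or inflection.

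First I would establish existence. Since $C_2/C_1 > 0$, we have $P(0) = -C_2/C_1 < 0$, while $P(v) \to +\infty$ as $v \to \infty$ because the quartic term dominates. Continuity of $P$ together with the intermediate value theorem then guarantees at least one root in $(0,\infty)$.

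For uniqueness, the cleanest route is Descartes' rule of signs applied to $P$, whose ordered nonzero coefficients are $1$, then $\tfrac{L}{2C_1}W_i$, then $-\tfrac{C_2}{C_1}$. Regardless of the sign of the upwash term $W_i$ — which may be positive (upwash) or negative (downwash) — there is exactly one sign change: if $W_i \ge 0$ the pattern is $(+,+,-)$, if $W_i < 0$ it is $(+,-,-)$, and the degenerate case $W_i = 0$ reduces to $v^4 = C_2/C_1$, which trivially has a single positive root. Hence $P$ has at most one positive real root, and combined with existence, exactly one. As an alternative, I could avoid Descartes and argue by monotonicity: $P'(v) = 4v^3 + \tfrac{L}{2C_1}W_i$ is strictly positive on $(0,\infty)$ when $W_i \ge 0$, making $P$ strictly increasing, whereas for $W_i < 0$ it has a single positive zero, so $P$ decreases then increases from the value $P(0)<0$; in either case $P$ crosses zero exactly once.

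Finally I would confirm the stationary point is the global minimizer. From \eqref{eq:drag}, $F_i \to +\infty$ as $v_i \to 0^+$ (the $C_2/v_i^2$ term dominates the $-L W_i/v_i$ term) and $F_i \to +\infty$ as $v_i \to \infty$ (the $C_1 v_i^2$ term dominates), and $F_i$ is smooth on $(0,\infty)$. A continuous function that diverges at both ends of an open interval attains its infimum at an interior stationary point; since there is exactly one such point, it is the unique global minimizer. The only step requiring genuine care — and the main obstacle — is handling the indefinite sign of $W_i$ uniformly, which is precisely what the single-sign-change structure of $P$ resolves.
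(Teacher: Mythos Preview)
Your argument is correct and in fact tidier than the paper's. The paper establishes uniqueness by computing the discriminant of the quartic \eqref{eq:vMinDrag} to conclude it has exactly two real roots, then performs polynomial long division to factor out the complex-conjugate pair and shows the two real roots have opposite signs via the product of the remaining quadratic factor. You instead combine the intermediate value theorem (using $P(0)=-C_2/C_1<0$ and $P\to+\infty$) with Descartes' rule of signs (one sign change regardless of $\operatorname{sign}W_i$), which is more elementary and sidesteps the somewhat delicate discriminant bookkeeping. The paper's route has the advantage that the factorization it produces could, in principle, be pushed further to obtain the optimal speed in closed form---a point the authors note explicitly---whereas your approach is purely existential. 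Conversely, you go beyond the paper by verifying that the unique positive stationary point is actually the global minimizer (via the boundary blow-up of $F_i$), which the paper's proof does not address; this closes a small gap in the original argument.
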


\begin{proof} 

The optimal airspeed for UAV $i$ is the solution to \eqref{eq:vMinDrag}, a quartic function of $v_i$ with the discriminant 
\begin{equation}
    \Delta_4 = - \Bigg(256 \frac{C_2}{C_1} + 27\Big(-\frac{L}{2C_1}W_i(\bm{p}_i, t)\Big)^4 \Bigg) > 0,
\end{equation}
which implies that \eqref{eq:vMinDrag} has two complex conjugate roots and two real roots.
The imaginary roots satisfy,
\begin{equation} \label{eq:imaginaryRoots}
    v_i^2 + b v_i + c = 0, \quad b^2 - 4c < 0,
\end{equation}
and thus $c > 0$ .
Next, polynomial long division of \eqref{eq:vMinDrag} on \eqref{eq:imaginaryRoots} yields a quadratic form for the real roots and additional conditions on $a$ and $b$, i.e.,
\begin{align}
    v_i^2 - bv + b^2 - c &= 0, \label{eq:realRoots} \\
    2bc + \frac{L}{2C_1}W_i(\bm{p}_i, t) - b^3 &= 0, \\
    c^2 - b^2 c - \frac{C_2}{C_1} &= 0. \label{eq:lma1C}
\end{align}
Condition \eqref{eq:lma1C} implies,
\begin{equation}
    c(c - b^2) = \frac{C_1}{C_2} > 0,
\end{equation}
which, in turn, implies $c > b^2$.
Finally, applying the quadratic equation to \eqref{eq:realRoots} yields,
\begin{equation}
    v_i = \frac{b \pm \sqrt{-3b^2 + 4c}}{2}.
\end{equation}
Multiplying the two real roots yields,
\begin{equation}
    \frac{1}{4}\Big(4b^2 - 4c \Big) = b^2 - c < 0.
\end{equation}
Thus, the two real solutions to $v_i$ have opposite signs, and \eqref{eq:vMinDrag} has exactly one real positive solution.
\end{proof}

Note that following the proof of Lemma \ref{lma:existance} it is possible to derive the optimal airspeed analytically; however, this is beyond the scope of this article.
Our next result characterizes how the upwash benefit affects the optimal airspeed of each UAV.

\begin{lemma} \label{lma:upwashSpeed}
The optimal airspeed of UAV $i$ decreases when gaining an upwash benefit and increases when experiencing an upwash cost.
\end{lemma}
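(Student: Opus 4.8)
The plan is to treat the optimal airspeed as an implicitly defined function of the upwash magnitude $W_i$ and to sign its derivative by the implicit function theorem. I would write $G(v_i, W_i) \coloneqq v_i^4 + \frac{L}{2C_1} W_i v_i - \frac{C_2}{C_1}$, so that the optimal speed is the unique positive root of $G = 0$ guaranteed by Lemma \ref{lma:existance}. Because the two real roots of the quartic have opposite signs (as established in that proof), the positive root is simple, $v_i$ depends smoothly on $W_i$, and the implicit function theorem yields $\frac{d v_i}{d W_i} = -\big(\partial G/\partial W_i\big)\big/\big(\partial G/\partial v_i\big)$. The whole lemma then reduces to determining the sign of this quotient.

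Next I would compute the two partial derivatives. The numerator is $\partial G/\partial W_i = \frac{L}{2C_1} v_i$, which is strictly positive since $L, C_1 > 0$ and $v_i > 0$. The denominator is $\partial G/\partial v_i = 4 v_i^3 + \frac{L}{2C_1} W_i$, so the sign of $\frac{d v_i}{d W_i}$ is exactly the negative of the sign of this denominator.

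The main obstacle is that the denominator is not manifestly signed, since the upwash term $W_i$ may be negative, corresponding to an upwash cost (downwash). The key step is to eliminate $W_i$ using the optimality relation itself: dividing $G = 0$ by $v_i > 0$ gives $\frac{L}{2C_1} W_i = \frac{C_2}{C_1 v_i} - v_i^3$, and substituting this into the denominator collapses it to $3 v_i^3 + \frac{C_2}{C_1 v_i}$, which is strictly positive for every $v_i > 0$. This simultaneously confirms that $\partial G/\partial v_i \neq 0$ (so the implicit function theorem applies) and forces $\frac{d v_i}{d W_i} < 0$.

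Finally I would translate the sign into the stated claim. Recalling from \eqref{eq:drag} that a larger $W_i$ reduces drag (an upwash benefit) while a smaller, negative $W_i$ increases drag (an upwash cost), the strict inequality $\frac{d v_i}{d W_i} < 0$ says the optimal airspeed strictly decreases as the benefit grows and strictly increases as the cost grows, which is precisely the lemma. As an optional cross-check, I might note the equivalent monotonicity reading: $G$ is strictly increasing in both $v_i$ (at the positive root) and $W_i$, so restoring $G = 0$ after an increase in $W_i$ requires a decrease in $v_i$.
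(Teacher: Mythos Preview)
Your argument is correct and in fact establishes the stronger statement that the optimal airspeed is a strictly decreasing function of $W_i$ on all of $\mathbb{R}$. The paper's proof takes a more elementary and shorter route: it writes down the isolation speed $v_i^* = (C_2/C_1)^{1/4}$ at $W_i = 0$, substitutes back into the optimality relation \eqref{eq:vMinDrag} to obtain $v_i^4 - (v_i^*)^4 = -\tfrac{L}{2C_1} v_i W_i$, and reads off the sign directly from $v_i > 0$. Thus the paper only compares the optimal speed against the single baseline $W_i = 0$, which is all that is needed for the lemma as stated and for the downstream use in Lemma~\ref{lma:unstableUpwash} and Theorem~\ref{thm:noV}. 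Your implicit-function approach is slightly heavier but buys full monotonicity in $W_i$, and your substitution $\tfrac{L}{2C_1}W_i = \tfrac{C_2}{C_1 v_i} - v_i^3$ to sign the denominator is essentially the same algebraic move the paper uses, just deployed at a different point in the argument.
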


\begin{proof}
Consider a UAV $i\in\mathcal{A}$ flying in isolation.
In this case $W(\bm{p}_i, t) = 0$, and the optimal airspeed arises when \eqref{eq:fDot} is satisfied, i.e.,
\begin{equation}
    v_i^* = \Big(\frac{C_2}{C_1}\Big)^{1/4}.    
\end{equation}
Substituting this into \eqref{eq:fDot} and rearranging terms implies,
\begin{equation}
    v_i^4 - (v_i^*)^4 = -\frac{L}{2 C_1} v_i W_i(\bm{p}_i, t).
\end{equation}
Thus, as $v_i > 0$ from Lemma \ref{lma:existance},  $W_i < 0$ (upwash cost) implies $v_i > v_i^*$ and $W_i > 0$ (upwash benefit) implies $v_i < v_i^*$.
\end{proof}

\begin{lemma} \label{lma:unstableUpwash}
    Under Assumption \ref{smp:main}, flying at the optimal airspeed while receiving an upwash benefit is unstable (Definition \ref{def:stable}), and each UAV will end up with no upwash benefit after finite time.
\end{lemma}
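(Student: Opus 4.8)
The plan is to exploit the speed mismatch that Lemma \ref{lma:upwashSpeed} forces between a follower and its upwash source. First I would invoke Assumption \ref{smp:pairwise} to reduce the analysis to a single leader--follower pair: let UAV $i$ receive a non-negligible benefit $W_i(\bm{p}_i,t) > 0$ from a unique leader $j$, and suppose both fly at the drag-minimizing airspeed guaranteed by Lemma \ref{lma:existance}. Writing $v_i^* = (C_2/C_1)^{1/4}$ for the common isolated optimum, Lemma \ref{lma:upwashSpeed} gives $v_i < v_i^*$ for the follower, while the leader --- receiving no benefit when it is at the front of the pair --- flies at $v_j = v_i^*$. Hence $v_i < v_j$.

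Next I would translate this speed gap into motion of the relative position and read off instability directly against Definition \ref{def:stable}. Under Assumption \ref{smp:main} the headings are, to within $\epsilon$, aligned with the common direction $\theta_g$, so from the dynamics \eqref{eq:dynamics} the relative velocity $\dot{\bm{s}}_{ij} = \dot{\bm{p}}_i - \dot{\bm{p}}_j$ is essentially streamwise with magnitude $v_j - v_i > 0$. Stability requires $v_i(t) = v_i(t_0)$ for all $t$, but the optimal airspeed solving \eqref{eq:vMinDrag} is a function of $W_i(\bm{p}_i,t)$, which depends only on the relative position $\bm{s}_{ij}$. A constant $v_i$ would therefore force a constant $W_i$, hence a stationary $\bm{s}_{ij}$, hence $v_i = v_j$ --- contradicting $v_i < v_j$. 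Thus no trajectory keeping the follower optimal can satisfy Definition \ref{def:stable}, which establishes the instability.

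Finally I would establish the finite-time loss of benefit. Because $v_i < v_j$, the streamwise coordinate $\bm{s}_{ij}\cdot\hat{x}$ decreases monotonically, driving the follower out of the peak of the streamwise Gaussian $g$ in \eqref{eq:streamwise} (the spanwise coordinate is frozen by the alignment assumption, so $f$ is constant along the motion). The key quantitative step is to bound the closing rate from below: while the benefit stays above a fixed threshold $\delta > 0$, the identity $v_i^4 - (v_i^*)^4 = -\tfrac{L}{2C_1}\, v_i\, W_i(\bm{p}_i,t)$ from the proof of Lemma \ref{lma:upwashSpeed}, together with the speed bounds \eqref{eq:bounds}, keeps $v_j - v_i$ bounded below by a positive constant. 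Then $\bm{s}_{ij}\cdot\hat{x}$ drifts at a rate bounded away from zero, and the follower leaves the region $\{W_i \ge \delta\}$ in finite time.

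The hard part will be the finite-time claim itself: the Gaussian $g$ has unbounded support and the speed gap $v_j - v_i$ vanishes as $W_i \to 0$, so the benefit cannot reach exactly zero in finite time. The honest statement is that the benefit drops below any prescribed negligibility threshold in finite time, which is what ``no upwash benefit'' should mean here; I would make this threshold explicit, or equivalently treat the effective support of $g$ as compact. A secondary obstacle is extending the single-pair argument to the whole flock: in a chain the leader of a given pair may itself be slowing down, so I would apply the pairwise result front-to-back, using Assumption \ref{smp:pairwise} to guarantee that each UAV has a single well-defined upwash source at each stage.
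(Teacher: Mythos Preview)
Your instability argument is correct but organized differently from the paper's. You fix a leader--follower pair, assume the leader receives no benefit, and invoke Lemma~\ref{lma:upwashSpeed} to obtain a speed gap $v_i < v_j$ directly, deferring the full-flock case to a front-to-back induction. The paper instead assumes stability globally, substitutes $v_i^* = v_j^*$ into \eqref{eq:vMinDrag} to deduce the necessary condition $W_i(\bm{p}_i,t) = W_j(\bm{p}_j,t)$ for \emph{every} pair, and then argues by infinite regress: the streamwise asymmetry of the wake means that any UAV with nonzero upwash must have another UAV ahead of it supplying that upwash, which (under Assumption~\ref{smp:pairwise}) in turn needs yet another UAV ahead, contradicting $|\mathcal{A}| = N < \infty$. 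Both arguments rest on the same physical asymmetry---a frontmost UAV exists and gets no benefit---but the paper's equal-upwash condition plus regress handles the whole flock in one stroke, while your pairwise decomposition buys a more concrete dynamical picture (the relative streamwise drift) at the price of the inductive wrap-up you flag at the end. On the finite-time claim your proposal is in fact more careful than the paper: the paper's proof establishes only the impossibility of a stable configuration with nonzero upwash and does not separately bound the time to lose the benefit, so your remark about the unbounded Gaussian support and the need to interpret ``no upwash benefit'' as falling below a negligibility threshold is well placed.
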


\begin{proof}
For a UAV formation to be stable, it is necessary that $v_i = v_j,$ $\theta_i = \theta_j$ for all UAVs, $i,j\in\mathcal{A}$.
Under our premise, this implies $v_i^* = v_j^*$ for all $i,j\in\mathcal{A}$ and,
\begin{align*}
    (v_i^*)^4 + \frac{L}{2 C_1} W_i(\bm{p}_i, t) v_i^* &- \frac{C_2}{C_1} = 
    0 \\
    &= (v_j^*)^4 + \frac{L}{2 C_1} W_j(\bm{p}_j, t) v_j^* - \frac{C_2}{C_1},
\end{align*}
which implies,
\begin{equation} \label{eq:nec1}
    W_i(\bm{p}_i, t) = W_j(\bm{p}_j, t).
\end{equation}
The latter is an equivalent necessary condition for stability.
Without loss of generality, let $\theta_i = \theta_j = 0$ for all $i, j \in \mathcal{A}$.
Consider the case when UAV $i$ is ahead of another UAV $j$, i.e.,
\begin{equation} \label{eq:ahead}
    (\bm{p}_i - \bm{p}_j) \cdot
    \begin{bmatrix}
    1 \\ 0
    \end{bmatrix}
    > 0,
\end{equation}
where $W_i = W_j \neq 0$, and the upwash effect of $i$ upon $j$ is not negligible.
In this case, the streamwise asymmetry of the upwash field requires the existence of another UAV $k$ such that $k$ as a negligible impact on $j$ (Assumption \ref{smp:pairwise}) and a significant impact on $i$; this can only be achieved if UAV $k$ is ahead of $i$ in the streamwise direction.
Repeating this process implies an uncountably infinite number of UAVs, which contradicts our premise that $N$ is finite.
\end{proof}

\begin{theorem} \label{thm:noV}
Under Assumption \ref{smp:main}, flying at the optimal air speed never leads to an emergent V formation.
\end{theorem}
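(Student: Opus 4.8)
The plan is to reduce Theorem~\ref{thm:noV} to a direct corollary of Lemma~\ref{lma:unstableUpwash}. The essential observation is that a V (or echelon) formation is, by its very purpose, a configuration in which trailing UAVs are positioned to extract an upwash benefit from their leaders; that is, any V formation must contain at least one UAV $i\in\mathcal{A}$ with strictly positive upwash, $W_i(\bm{p}_i,t) > 0$. I would begin by making this characterization explicit: for the formation to save energy, and hence to qualify as the line-flocking V of interest, it cannot reduce to the degenerate case in which every agent experiences $W_i = 0$, since that situation is aerodynamically indistinguishable from agents flying in isolation.

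First, I would suppose for contradiction that all UAVs fly at the optimal airspeed and that the flock settles into an emergent V formation at some time $t_0$. By the preceding observation there is a UAV receiving an upwash benefit. Since every UAV flies at the drag-minimizing speed of Lemma~\ref{lma:existance}, Lemma~\ref{lma:unstableUpwash} applies verbatim: the configuration is unstable in the sense of Definition~\ref{def:stable}, and every upwash benefit is extinguished after finite time. Hence the putative V formation cannot persist past that finite horizon, contradicting the requirement that an emergent formation be maintained for all $t \geq t_0$.

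An alternative, and arguably sharper, route goes through the necessary condition extracted inside the proof of Lemma~\ref{lma:unstableUpwash}. Stability at a common optimal airspeed forces $v_i = v_j$, which through the quartic~\eqref{eq:vMinDrag} forces the uniform-upwash condition $W_i(\bm{p}_i,t) = W_j(\bm{p}_j,t)$ for all $i,j\in\mathcal{A}$. But a genuine V formation is intrinsically asymmetric: the apex UAV sits ahead of the wake of every other agent and therefore receives $W = 0$, while the followers receive $W > 0$. The upwash field is thus non-uniform across the flock, directly violating $W_i = W_j$. Either line of reasoning closes the argument.

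The main obstacle is not computational but definitional: the cleanliness of the proof hinges entirely on pinning down what ``emergent V formation'' means precisely enough to invoke the earlier lemmas. I would treat \emph{emergent} as synonymous with the persistence encoded in Definition~\ref{def:stable}, i.e.\ the formation, once formed, is maintained, and I would treat \emph{V formation} as any stable line-flocking configuration containing a non-trivial leader--follower upwash relationship, so that some $W_i > 0$. With these identifications the geometric intuition about the V translates cleanly into the aerodynamic condition needed, and the theorem follows with essentially no additional calculation beyond what Lemma~\ref{lma:unstableUpwash} already supplies.
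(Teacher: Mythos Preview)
Your proposal is correct, and your second route is essentially the paper's own argument: the paper fixes the frontmost agent $j$ and a follower $i$ with $w_i^j>0$, invokes the streamwise asymmetry of the upwash field to conclude $W_i>W_j$, and then applies Lemma~\ref{lma:upwashSpeed} (equivalently, the quartic~\eqref{eq:vMinDrag}) to get $v_i^*<v_j^*$, so the pair separates and the V dissolves. Your first route, treating the theorem as a direct corollary of Lemma~\ref{lma:unstableUpwash}, is a cleaner packaging that the paper could have adopted but chose to unpack; the underlying mechanism is identical, and the only substantive work in either version is the definitional identification you flag at the end, namely that an emergent V formation must be stable in the sense of Definition~\ref{def:stable} and must contain at least one UAV with $W_i>0$.
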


\begin{proof}

Let every UAV $i\in\mathcal{A}$ have a state such that the position $\bm{p}_i$ corresponds to a V formation where no two agents overlap, each UAV receives some upwash benefit, and Assumption \ref{smp:pairwise} is satisfied.
Let $j$ denote the frontmost agent and $i$ satisfy $w_i^j(\bm{p}_i, t) > 0$.
Then, by the asymmetry of the streamwise upwash benefit \eqref{eq:streamwise}, $W_i > W_j$.
Thus, by Lemma \ref{lma:upwashSpeed}, $v_i^* < v_j^*$, and the distance $||\bm{s}_{ij}(\bm{p}_i, t)||$ is increasing.
This implies that the UAVs will travel at different optimal air speeds until the V formation breaks apart.
\end{proof}

Theorem \ref{thm:noV} demonstrates that simply flying at the energy-optimal airspeed can never lead to emergent line flocking!
This highlights a significant shortcoming within applying the robot ecology approach \cite{Egerstedt2021RobotAutonomy} to line flocking.
Specifically, if each UAV $i\in\mathcal{A}$ minimizes its acceleration subject to a constraint that $v_i$ matches $v_i^*$ as closely as possible, then an energy-saving V formation can not occur.
A recent game-theoretic approaches has run into similar challenges \cite{Shi2021AreFormation}.
Therefore, rather than minimizing the ``energy'' spent to actuate by minimizing acceleration, we propose that each agent ought to minimize its ``locomotive power'' expended through the cost function,
\begin{align} \label{eq:costFunction}
    J(\bm{x}_i) &= \Big(\frac{v_i - v_i^*}{v_{\max} - v_{\min}}\Big)^2 + \Big(\frac{\omega_i}{\omega_{\max}}\Big)^2,
\end{align}
where $v_i^*$ is the unique optimal airspeed (Lemma \ref{lma:existance}) and both cost components are normalized because they are orthogonal control vectors.
Employing the cost \eqref{eq:costFunction} is a subtle change from the state of the art \cite{Egerstedt2021RobotAutonomy}, but it ends up playing a critical role in the generation and stabilization of emergent V formations.

\subsection{Wake Interactions}

Next, we explore the wake interaction of UAV $j$ on $i \in\mathcal{A}$ using our simplified aerodynamic model.
To simplify our notation, we use the scalars $x_i$ and $y_i$ to denote the relative position of $i$ with respect to $j$ in the streamwise and spanwise directions, respectively.
First, we estimate how UAV $i$ tends to roll due to the local flow field by evaluating the integral,
\begin{equation} \label{eq:moment}
    m_i(x_i, y_i, t) = g(x_i) \int_{y_i-b}^{y_i+b} (\xi-y_i) f(\xi, t) d\xi.
\end{equation}
Similarly, we can estimate the lift induced on the wing through momentum transfer,
\begin{equation} \label{eq:force}
    w_i(x_i, y_i, t) = g(x_i) \int_{y_i-b}^{y_i+b} f(\xi, t)  d\xi.
\end{equation}
Note that both \eqref{eq:moment} and \eqref{eq:force} have analytical closed-form solutions, as the irrotational flow far from the wing-tip is integrable and it transitions to an affine function near the wing tip.
The resulting upwash and tendancy to roll functions are visualized in Fig. \ref{fig:moment}.
Taking the sum of \eqref{eq:force} over all UAVs determines the aggregate upwash effect on $i$,
\begin{equation} \label{eq:fields}
\begin{aligned}
    W_i(\bm{p}_i, t) = \sum_{k\in\mathcal{A}\setminus\{i\}} w_k(\bm{p}_i, t), \\
    M_i(\bm{p}_i, t) = \sum_{k\in\mathcal{A}\setminus\{i\}} m_k(\bm{p}_i, t),
\end{aligned}
\end{equation}
where $\bm{p}_i$ must be projected onto the streamwise and spanwise components of UAV $k$ to yield $x_i$ and $y_i$.
Note that \eqref{eq:fields} should be multiplied by a constant to compute the upwash force in the correct units.
However, our results only rely on the sign of the derivative of each term, so any positive scaling factor is irrelevant to our analysis.
Furthermore, the sums of \eqref{eq:fields} contain at most one term under Assumption \ref{smp:pairwise}.

\begin{figure}[ht]
    \centering
    \includegraphics[width=\linewidth]{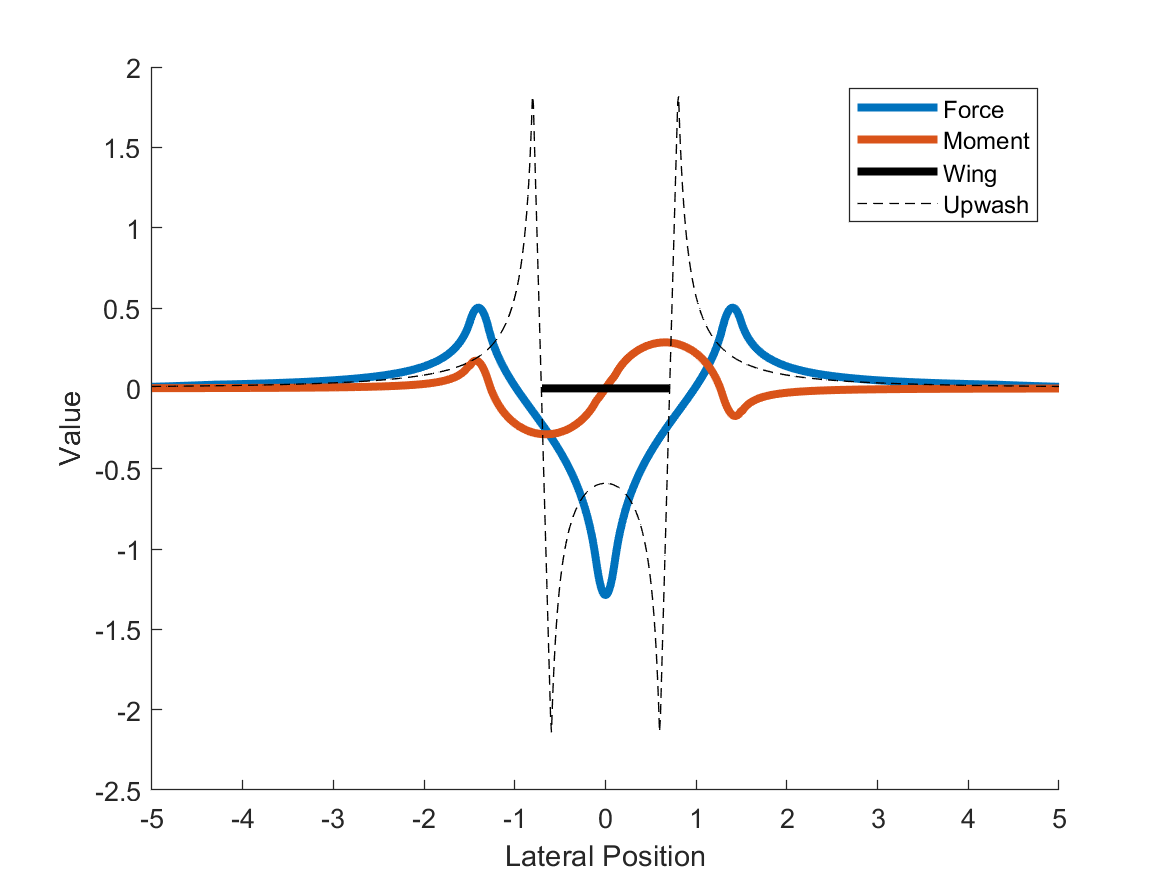}
    \caption{Upwash force and tendancy to roll calculated by integrating the upwash velocity field along the wingspan at each point in the domain.}
    \label{fig:moment}
\end{figure}

Next, we estimate the cost required for UAV $i$ to maintain a constant altitude and orientation in the presence of the upwash field.
In particular, the UAV must expend energy to counter-roll against gradients in the upwash field, and it must pitch upward to counteract a negative upwash (downwash).
This leads to an intuitive physics-based description of the ``cost to flock,''
\begin{equation}
    E_i(p_i, t) = \kappa |M_i(p_i, t)| - W_i(p_i, t),
\end{equation}
where $\kappa$ is a system parameter that captures the tradeoff between the cost to roll and the cost to pitch upward.
We require each UAV to satisfy,
\begin{equation}
    \dot{E}_i(\bm{p}_i, t) \leq 0,
\end{equation}
where $E_i(\bm{p}_i, t)$ has a finite lower bound.
Thus, each UAV is driven toward an equilibrium point where the energy lost through wake interactions is minimized, or equivalently, the energy saved by flocking is maximized.

\begin{lemma} \label{lma:zeroUpwash}
For two agents $i, j$ such that $\bm{p}_j = \bm{0}$, there exists a unique lateral position $\hat{y}_i > 0$ such that:
\begin{align}
    y_i < \hat{y} \implies w_i(\cdot, y_i, \cdot) < 0, \\
    y_i = \hat{y} \implies w_i(\cdot, y_i, \cdot) = 0, \\
    y_i > \hat{y} \implies w_i(\cdot, y_i, \cdot) > 0.
\end{align}
\end{lemma}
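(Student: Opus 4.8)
The plan is to reduce the statement to the sign of the spanwise integral. Since the streamwise factor $g(x_i) = 2\exp(-(x_i-\mu)^2/(2\sigma^2))$ is strictly positive for every $x_i$, the sign of $w_i(\cdot, y_i, \cdot)$ equals the sign of
\[
    I(y_i) \coloneqq \int_{y_i - b}^{y_i + b} f(\xi, t)\, d\xi,
\]
and it suffices to produce a unique $\hat{y} > 0$ at which $I$ passes from negative to positive. I would first note that $u_i$ is odd in $r$ (both $\Gamma/2\pi r$ and $\Omega r$ are odd), so $f(\xi) = u_i(\xi-b) - u_i(\xi+b)$ is even and hence $I$ is an even function of $y_i$; writing $U$ for an even antiderivative of $u_i$, we have $I(y_i) = 2U(y_i) - U(y_i - 2b) - U(y_i + 2b)$, which is available in closed form as noted after \eqref{eq:force}.

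Next I would fix the sign of $I$ at the two ends of the half-line. At $y_i = 0$ the follower's wing $[-b,b]$ lies between the two leading vortices in pure downwash, and indeed $I(0) = 2U(0) - 2U(2b) = -2\int_0^{2b} u_i(r)\,dr < 0$ because $u_i > 0$ on $(0, 2b)$. For $y_i$ large the interval $[y_i - b, y_i + b]$ has cleared the near vortex and sits in the outboard upwash, where $f > 0$ pointwise, so $I(y_i) > 0$. Continuity of $f$ (guaranteed by the rotational core, which removes the wingtip singularity) makes $I$ continuous, so the intermediate value theorem supplies at least one zero $\hat{y} \in (0, \infty)$.

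Uniqueness is the crux, and the natural pitfall is that $I$ is not monotone: in the irrotational limit $I$ spikes as the follower's inner wingtip sweeps past the leading vortex near $y_i = 2b$, so a direct monotonicity argument on $I$ fails. My plan is instead to route uniqueness through a single sign change of $f$ itself. Using $r^* \ll b$, I would verify piecewise that on $[0,\infty)$ the field $f$ is strictly negative on $[0, \xi_0)$ and strictly positive on $(\xi_0, \infty)$ for a single crossing $\xi_0 \in (b, b+r^*)$: on $[0, b - r^*)$ both arguments are in the irrotational regime and $f < 0$; across the core interval $f(\xi) = \Omega(\xi - b) - \frac{\Gamma}{2\pi(\xi + b)}$ is strictly increasing, with $f(b) = -u_i(2b) < 0$ and $f(b + r^*) > 0$; and beyond the core $f > 0$. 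By evenness this gives $f < 0$ on $(-\xi_0, \xi_0)$ and $f > 0$ outside, with $\xi_0 > b$.

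Finally I would combine this with a transversality argument. At any zero $y^*$ of $I$ the integration interval can be neither contained in $(-\xi_0, \xi_0)$ (that forces $I < 0$) nor in $\{|\xi| > \xi_0\}$ (that forces $I > 0$); since $\xi_0 > b$ also gives $y^* - b > -\xi_0$, we conclude $f(y^* + b) > 0 \ge f(y^* - b)$ and therefore $I'(y^*) = f(y^* + b) - f(y^* - b) > 0$. Thus every zero of $I$ on $(0,\infty)$ is an upward crossing; together with $I(0) < 0$ this forces exactly one zero, since a second zero would have to be an upward crossing of a function that is already positive. Setting $\hat{y}$ to this unique zero, the three cases follow immediately because $I < 0$ below $\hat{y}$ and $I > 0$ above it. The main obstacle is the piecewise verification that $f$ has a single sign change, after which the transversality count is clean.
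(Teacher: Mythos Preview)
Your argument is correct and follows the same route as the paper: reduce to the sign of the spanwise integral $I(y_i)$, use the single sign change of $f$ on the positive axis, and show that $I$ is increasing at any zero (you via $I'(y^*)=f(y^*+b)-f(y^*-b)>0$, the paper via the equivalent finite difference $D(\epsilon)=I(\hat y)-I(\hat y+\epsilon)<0$). Your placement of the crossing at $\xi_0\in(b,b+r^*)$ is in fact sharper than the paper's stated $f(b)=0$, since $f(b)=u_i(0)-u_i(2b)=-u_i(2b)<0$.
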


\begin{proof}
By definition $f_i(y_i)$ satisfies,
\begin{equation}
    \begin{aligned} \label{eq:yiBehavior}
        0\leq y_i < b \implies f_i(y_i) < 0, \\
        y_i = b \implies f_i(y_i) = 0, \\
        y_i > b \implies f_i(y_i) > 0.
    \end{aligned}
\end{equation}
Thus, the equation,
\begin{equation} \label{eq:upwashLemma}
    \int_{y_i-b}^{y_i+b} f(\xi) d\xi = 0,
\end{equation}
has a solution on the interval $y_i \in (0, 2b)$.
Let $\hat{y}_i$ denote the smallest solution to \eqref{eq:upwashLemma}, then, for any $\epsilon > 0$, the upwash difference is,
\begin{align*}
    D(\epsilon) =& \int_{\hat{y}_i-b}^{\hat{y}_i+b} f(\xi) d\xi - \int_{(\hat{y}_i+\epsilon)-b}^{(\hat{y}_i+\epsilon)+b} f(\xi) d\xi \\
    &= \int_{\hat{y}_i-b}^{\hat{y}_i-b+\epsilon} f(\xi) d\xi - \int_{\hat{y}_i+b}^{\hat{y}_i+b+\epsilon} f(\xi) d\xi.
\end{align*}
Note that by \eqref{eq:yiBehavior} the first integral is negative, as $\hat{y}_i \in (0, 2b).$
Meanwhile the second integral is positive, and thus $D(\epsilon)$ is negative for $\epsilon > 0$.
This implies that $\hat{y}_i$ is the unique solution, and $y_i > \hat{y_i} \implies w_i(\cdot, y_i, \cdot) > 0$.
Repeating the process for $\epsilon < 0$ completes the proof.
\end{proof}

\begin{lemma} \label{lma:zeroMoment}
For two agents $i, j$ such that $\bm{p}_j = \bm{0}$, there exists a unique lateral position $\hat{y}_i > b$ such that:
\begin{align}
    y_i > \hat{y} \implies m_i(\cdot, y_i, \cdot) < 0, \\
    y_i = \hat{y} \implies m_i(\cdot, y_i, \cdot) = 0, \\
    y_i < \hat{y} \implies m_i(\cdot, y_i, \cdot) > 0.
\end{align}
\end{lemma}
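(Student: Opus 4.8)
The plan is to reduce the claim to a one-dimensional sign analysis of the spanwise integral and then, exactly as in Lemma~\ref{lma:zeroUpwash}, to combine an intermediate-value argument for existence with a marginal-contribution argument for uniqueness. Since $g(x_i)>0$, the sign of $m_i$ equals that of
\[ I(y_i) \coloneqq \int_{y_i-b}^{y_i+b}(\xi-y_i)\,f(\xi)\,d\xi = \int_{0}^{b}\eta\,\big(f(y_i+\eta)-f(y_i-\eta)\big)\,d\eta, \]
where the second form follows by centering the window through $\eta=\xi-y_i$ and folding the two halves. Because $u_i$ is odd, $f$ is even, so $I$ is odd in $y_i$; it therefore suffices to analyze $y_i>0$, and the sign flips in the statement for $y_i<0$ follow by reflection.

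Next I would establish the two boundary signs that bracket the crossing. For $0<y_i\le b$ the folded integrand is nonnegative: when both $y_i\pm\eta$ lie in $(0,b)$, the increase of $f$ from its trough up to $f(b)=0$ gives $f(y_i+\eta)>f(y_i-\eta)$, and when $y_i+\eta>b>|y_i-\eta|$ the first term is positive while the second is negative. Hence $I(y_i)>0$ on $(0,b]$, which already forces any crossing to satisfy $\hat y_i>b$. Conversely, once $y_i\ge 2b$ the whole window lies in the far upwash region, where $f$ is positive and strictly decreasing (its tail decays like $\Gamma b/\big(\pi(y^2-b^2)\big)$); there $f(y_i+\eta)<f(y_i-\eta)$, so $I(y_i)<0$. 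Continuity of $I$ and the intermediate value theorem then yield a zero $\hat y_i\in(b,2b)$.

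The crux is uniqueness: I must show $I$ changes sign exactly once on $(0,\infty)$. Negativity is already secured on $[2b,\infty)$, so the only danger is a second crossing inside $(b,2b)$. The route I would take is to prove that every zero of $I$ is a strict downward crossing, which for a continuous function rules out more than one zero. Differentiating by the Leibniz rule gives $I'(y_i)=b\big(f(y_i+b)+f(y_i-b)\big)-U(y_i)$, where $U(y_i)=\int_{y_i-b}^{y_i+b}f(\xi)\,d\xi$ is precisely the upwash integral of Lemma~\ref{lma:zeroUpwash}. At a zero the identity $I(\hat y_i)=0$ states that the $f$-weighted centroid of the window coincides with its center, and combining this with the formula for $I'$ reduces the desired inequality $I'(\hat y_i)<0$ to a comparison between the two endpoint values of $f$ and its mean over the window.

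That endpoint-versus-mean comparison is the step I expect to be the main obstacle, since $f$ is not globally concave---its decaying tail is convex---so the inequality cannot be read off from a single convexity statement. My plan to overcome this is to exploit $r^*\ll b$, which places the unique interior maximum of $f$ within $O(r^*)$ of the wingtip: on the bounded interval $(b,2b)$ the window then meets the peak region only in a negligibly thin layer and otherwise sees a monotone profile, letting the comparison be carried out explicitly. As a self-contained alternative I would mirror the $D(\epsilon)$ computation of Lemma~\ref{lma:zeroUpwash}, writing $I(\hat y_i+\epsilon)-I(\hat y_i)$ as the net of the signed strips entering near $\hat y_i-b$ and leaving near $\hat y_i+b$ as the window slides; both strips contribute negatively for $\epsilon>0$ on $(b,2b)$, so $I$ cannot return to zero after its first crossing. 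Combining the three sign facts yields the stated trichotomy about the unique $\hat y_i>b$, and oddness of $I$ extends it to $y_i<0$.
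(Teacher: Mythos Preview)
The paper does not actually prove this lemma: it states only that ``the proof of Lemma~\ref{lma:zeroMoment} is similar to that of Lemma~\ref{lma:zeroUpwash}, and we omit it for brevity.'' Your proposal follows precisely that template---factor out $g(x_i)>0$, reduce to a one-dimensional spanwise integral, obtain existence by an intermediate-value argument over a bounded window, and argue uniqueness via a marginal $D(\epsilon)$-type sliding-window comparison---so at the level of strategy you are doing exactly what the paper intends, only with considerably more detail than the paper supplies.

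One caveat worth tightening before you would call this a proof: your sign argument on $(0,b]$ assumes $f$ is monotone increasing there (``the increase of $f$ from its trough up to $f(b)=0$''). Under the vortex model \eqref{eq:spanAero}--\eqref{eq:spanwise}, $f$ is even with a local \emph{maximum} at $y=0$, decreases on $(0,b-r^*)$, and only rises sharply through the rotational core near $b$; so $f(y_i+\eta)>f(y_i-\eta)$ can fail for small $\eta$ when both arguments sit in the decreasing region. This does not break the approach---the folded integral is still dominated by the large contributions where $y_i+\eta$ crosses into the upwash peak while $y_i-\eta$ remains in downwash---but the justification should appeal to that asymmetry (or to $r^*\ll b$, which you already invoke later) rather than to monotonicity of $f$ on all of $(0,b)$. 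The paper's omitted proof, being only a pointer to Lemma~\ref{lma:zeroUpwash}, does not resolve this either.
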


\begin{proof}
The proof of Lemma \ref{lma:zeroMoment} is similar to that of Lemma \ref{lma:zeroUpwash}, and we omit it for brevity.
\end{proof}

\begin{theorem} \label{thm:equilibrium}
    Consider two UAVs $i, j \in \mathcal{A}$ such that $\bm{p}_j = \bm{0}$ and the upwash effect of $j$ on $i$ is not negligible.
    Let $[x_i, y_i]^T = \bm{p}_i - \bm{p}_j$ denote the streamwise and lateral position of $i$ relative to $j$. Furthermore, let the constant $\kappa$ be sufficiently small to admit a position $x_i^*, y_i^*$ such that $\nabla E_i(x_i^*, y_i^*, \cdot) = \bm{0}$ and there is a benefit to flocking, i.e., $E_i(x_i^*, y_i^*, \cdot) < 0$.
    Then, any equilibrium point that yields $\nabla E_i(x_i^*, y_i^*, \cdot) = 0$ satisfies $x_i^* = -\mu$ $y_i^* > \sqrt{2}\,b$, which corresponds to a V or echelon formation.
\end{theorem}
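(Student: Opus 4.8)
The plan is to exploit the product structure of the cost-to-flock energy. With only two UAVs and $\bm{p}_j=\bm{0}$, Assumption \ref{smp:pairwise} collapses the sums in \eqref{eq:fields} to a single term, and both \eqref{eq:moment} and \eqref{eq:force} carry the common streamwise factor $g(x_i)$. Hence I would write $E_i(x_i,y_i)=g(x_i)\,h(y_i)$, where $h(y_i):=\kappa\,|I_m(y_i)|-I_w(y_i)$ collects the two lateral integrals $I_w(y_i)=\int_{y_i-b}^{y_i+b} f(\xi)\,d\xi$ and $I_m(y_i)=\int_{y_i-b}^{y_i+b}(\xi-y_i)f(\xi)\,d\xi$, and $g(x_i)>0$ everywhere (it is a Gaussian). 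Pulling the strictly positive $g(x_i)$ out of the absolute value is what makes this clean factorization legitimate, and it decouples the two components of $\nabla E_i$; this is the workhorse of the whole argument.

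Next I would read off the two stationarity conditions. Since $\partial_{x_i}E_i=g'(x_i)\,h(y_i)$, and the hypothesis $E_i(x_i^*,y_i^*)<0$ together with $g>0$ forces $h(y_i^*)<0\neq 0$, the $x$-equation reduces to $g'(x_i^*)=0$. Because a Gaussian has a single critical point, which is its peak, this pins $x_i^*$ to that peak, i.e.\ to the streamwise offset of maximal upwash benefit, recorded in the statement as $x_i^*=-\mu$. The $y$-equation $\partial_{y_i}E_i=g(x_i^*)\,h'(y_i^*)=0$ then reduces, via $g(x_i^*)>0$, to $h'(y_i^*)=0$, confirming the point is a genuine lateral equilibrium but, as I note below, playing no role in the lateral bound.

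The lateral bound is where the real content lies. From $h(y_i^*)<0$ I obtain $I_w(y_i^*)>\kappa\,|I_m(y_i^*)|\geq 0$, so a flocking benefit forces strictly positive net upwash, $I_w(y_i^*)>0$, independently of the $y$-stationarity. To convert this into $y_i^*>\sqrt{2}\,b$ I would invoke Lemma \ref{lma:zeroUpwash}, which guarantees a unique sign change of $I_w$ at some $\hat{y}_i\in(0,2b)$ with $I_w>0$ exactly for $y_i>\hat{y}_i$, and then compute that crossing explicitly. Using the far-field antiderivative $U(r)=\tfrac{\Gamma}{2\pi}\ln|r|$ (legitimate because $r^*\ll b$ renders the rotational core negligible in the wingspan integral), one gets $I_w(y_i)\propto 2\ln|y_i|-\ln|y_i-2b|-\ln|y_i+2b|$, whose zero on $(0,2b)$ solves $y_i^2=4b^2-y_i^2$, i.e.\ $\hat{y}_i=\sqrt{2}\,b$. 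Combining, $I_w(y_i^*)>0\Rightarrow y_i^*>\sqrt{2}\,b$.

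Finally I would interpret the geometry: $y_i^*>\sqrt{2}\,b>b$ places the follower laterally offset from the leader by more than a half-wingspan, so its wing sits outboard in the net-upwash region, which is precisely a V or echelon arrangement; together with $x_i^*=-\mu$ this yields the claimed equilibrium. I expect the main obstacle to be pinning the upwash zero-crossing to the exact value $\sqrt{2}\,b$, since Lemma \ref{lma:zeroUpwash} only localizes it to $(0,2b)$ and I must therefore carry out the far-field integral and justify discarding the core. A secondary subtlety is the non-smoothness of $|M_i|$ at the zero of $I_m$ (Lemma \ref{lma:zeroMoment}); I would sidestep it by observing that the lateral bound follows from the benefit inequality alone rather than from $h'(y_i^*)=0$, so the kink never enters the decisive estimate, and if desired it can be treated via the subdifferential.
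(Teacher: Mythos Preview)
Your proposal is correct and follows essentially the same route as the paper: exploit the separable product structure $E_i=g(x_i)\,h(y_i)$ to pin $x_i^*$ to the Gaussian peak, then evaluate the far-field spanwise integral to obtain $w_i\propto\ln\bigl(y_i^2/(4b^2-y_i^2)\bigr)$ and read off the unique zero at $y_i=\sqrt{2}\,b$, with the benefit hypothesis $E_i<0$ forcing $I_w(y_i^*)>0$ and hence $y_i^*>\sqrt{2}\,b$. If anything, your write-up is more explicit than the paper's about why $h(y_i^*)\neq 0$ (so that $g'(x_i^*)=0$ really follows from $\partial_{x_i}E_i=0$), about justifying the neglect of the rotational core via $r^*\ll b$, and about sidestepping the non-smoothness of $|M_i|$ by deriving the lateral bound from the benefit inequality alone.
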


\begin{proof}
    By definition $g(x_i,t)$ is log-convex and has a unique maximum at $x_i = -\mu$.
    Next, let $b + r^* \leq y_i \leq 2b - r^*$; the upwash experienced by the UAV is,
    \begin{align}
        w_i = g(\cdot)\Big(
        \int_{y_i-b}^{b-r^*} u_i(\xi-b) d\xi
        + \int_{b-r^*}^{b+r^*}  u_i(\xi-b) d\xi \notag \\
        + \int_{b+r^*}^{y_i+b}  u_i(\xi-b) dz\xi
        - \int_{}^{}  u_i(\xi+b) d\xi
        \Big),
    \end{align}
    where the second integral is zero by symmetry of the integrand about $(\xi-b) = 0$.
    This implies that
    \begin{align}
        w_i = g(\cdot)\frac{\Gamma}{2\pi}\Big(
        \ln(\frac{r^*}{2b - y_i}) + \ln(\frac{y_i}{r^*}) - \ln(\frac{y_i+2b}{y_i})
        \Big) \notag\\
        = g(\cdot)\frac{\Gamma}{2\pi}\ln\Big( \frac{y_i^2}{4b^2 - y_i^2} \Big).
    \end{align}
    Taking a derivative with respect to $y_i$ yields,
    \begin{equation}
        \frac{\partial w_i}{y_i} = g(\cdot)\frac{\Gamma}{2\pi}\Big( \frac{8b^2}{y(4b^2 - y^2)} \Big),
    \end{equation}
    which is positive for $y_i < 2b$, thus $w_i$ is strictly increasing for $y_i\in(b+r^*, 2b-r^*)$.
    Furthermore, Lemma \ref{lma:zeroUpwash} implies that $w_i = 0$ has a unique solution, which occurs at $y_i = \sqrt{2}\,b$. This constitutes the lower bound on $y_i$ by our premise.
\end{proof}

Finally, based on our analysis of the upwash field properties, we formally define the rules of anseroid behavior to generate emergent V and echelon formations.
\begin{definition} \label{def:anseroid}
    Anseroid behavior is characterized by two rules: (1) maximize the energy benefit of the local upwash field, and (2) match the drag-minimizing speed as closely as possible.
\end{definition}

\subsection{Optimal Control Problem}

We employ \emph{constraint-driven control} to implement anseroid behavior for each UAV.
Constraint-driven control is an optimization-based control technique, wherein the rules of interaction are embedded as constraints in an optimization problem.
This technique has been used successfully to control multi-agent constraint-driven systems, particularly in the ecologically-inspired robotics literature \cite{Notomista2019Constraint-DrivenSystems,Wang2017SafetySystems,Beaver2021Constraint-DrivenStudy}.
Our motivation for constraint-driven control is twofold:
first, it enables the UAVs to immediately react to their surroundings without the computational and communication costs associated with decentralized trajectory planning \cite{Beaver2020AnFlockingb,Beaver2022Constraint-DrivenAvoidance}.
This has the added benefit of allowing UAVs to be arbitrarily added to and removed from the domain without a priori knowledge \cite{Beaver2021Constraint-DrivenStudy}.
Second, implementing our desired behavior with constraints can allow for strong guarantees on the system-level behavior, unlike traditional optimization approaches where the objective is composed of performance criteria and soft constraints.
Thus, to implement the anseroid rules of Definition \ref{def:anseroid}, we propose Problem \ref{prb:optCtrl}.

\begin{problem} \label{prb:optCtrl}
Each UAV $i\in\mathcal{A}$ takes the optimal control input that optimizes,
\begin{align*}
\min_{v_i, \omega_i} & \Bigg\{ \Big(\frac{v_i - v_i^*}{v_{\max} - v_{\min}}\Big)^2 + \Big(\frac{\omega_i}{\omega_{\max}}\Big)^2 \Bigg\} \\
\text{subject to: } &
\eqref{eq:dynamics}, \eqref{eq:bounds}, \\
&\dot{E}(\bm{p}_i, t) \leq \rho \\
&|\theta_i - {\theta}_g| \leq \epsilon,
\end{align*}
where $\theta_g$ is the global heading angle (Assumption \ref{smp:main}) and $\rho \leq 0$ determines how quickly the UAVs approach the equilibrium point $\dot{E} = 0$.
Both terms are selected by a designer a priori.
\end{problem}

Note that while our analysis of the system behavior is in continuous-time, in many cases the optimal control problem must be formulated using discrete-time system dynamics.
This mapping is a practical consideration for implementation on a physical robot, and it can be a theoretical concern when the state variables, rather than the control input, appear in the constraints and objective function \cite{Beaver2022Constraint-DrivenAvoidance}.
This is an area of open research in the literature. Potential solutions include allowing the time step to grow arbitrarily small \cite{Xu2022FeasibilityFunctions}, employing control barrier functions to convert state constraints into control constraints \cite{Notomista2022Multi-robotTasks}, or tightening the constraints at each time step to ensure the trajectory remains feasible between time steps \cite{Pant2017SmoothLogic}.

In our next result, we provide a necessary and sufficient condition for the feasible space of Problem \ref{prb:optCtrl} to be empty, and we use this as the foundation for a switching control law.

\begin{theorem} \label{thm:feasible}
For UAV $i\in\mathcal{A}$ satisfying $|\theta_i - \theta_g|\leq \epsilon$, the feasible space of Problem \ref{prb:optCtrl} is empty if and only if the inequality,
\begin{equation} \label{eq:existanceThmStatement}
   v_i \, L_{\bm{f}_i}E_i \leq \rho - \frac{\partial E_i}{\partial t},
\end{equation}
does not hold for any
\begin{equation}
    v_{\min} \leq v_i \leq v_{\max},
\end{equation}
where $\bm{f} = [\cos(\theta_i), \sin(\theta_i)]^T$ denotes the forward direction of UAV $i$ and $L_{\bm{f}} E$ is the Lie derivative of $E_i$ in the direction of $\bm{f}$.
\end{theorem}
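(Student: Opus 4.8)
The plan is to reduce the two-variable feasibility question in $(v_i, \omega_i)$ to a one-variable condition on $v_i$ by showing that the angular control decouples entirely from the energy constraint. First I would expand the constraint $\dot{E}_i(\bm{p}_i, t) \leq \rho$ with the chain rule. Since $E_i = \kappa|M_i(\bm{p}_i,t)| - W_i(\bm{p}_i,t)$ depends on the state only through the position $\bm{p}_i$ and not through the heading $\theta_i$, the term $\frac{\partial E_i}{\partial \theta_i}\dot{\theta}_i$ vanishes, so $\omega_i$ does not appear in $\dot{E}_i$. Using the unicycle dynamics \eqref{eq:dynamics}, which give $\dot{\bm{p}}_i = v_i\,\bm{f}$ with $\bm{f} = [\cos(\theta_i), \sin(\theta_i)]^T$, I obtain
\begin{equation}
    \dot{E}_i = \nabla_{\bm{p}} E_i \cdot \dot{\bm{p}}_i + \frac{\partial E_i}{\partial t} = v_i\, L_{\bm{f}} E_i + \frac{\partial E_i}{\partial t}.
\end{equation}
Substituting this into $\dot{E}_i \leq \rho$ and rearranging yields exactly \eqref{eq:existanceThmStatement}, in which $L_{\bm{f}} E_i$ and $\frac{\partial E_i}{\partial t}$ are fixed scalars determined by the current state and $v_i$ is the only free variable.

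Next I would argue that the remaining constraints are independent of $v_i$ and always satisfiable, so they cannot empty the feasible set. The bound $|\omega_i| \leq \omega_{\max}$ and the heading constraint $|\theta_i - \theta_g| \leq \epsilon$ involve only $\omega_i$ and the current heading $\theta_i$. By the hypothesis of the theorem, $|\theta_i - \theta_g| \leq \epsilon$ already holds, and the admissible choice $\omega_i = 0$ (since $0 \in [-\omega_{\max}, \omega_{\max}]$) maintains this pointwise-in-time constraint. Consequently, for every admissible $v_i$ there is at least one admissible $\omega_i$, and feasibility of Problem \ref{prb:optCtrl} hinges entirely on whether an admissible $v_i$ exists.

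Finally I would assemble the biconditional from the two reductions: the feasible space of Problem \ref{prb:optCtrl} is nonempty if and only if there exists $v_i \in [v_{\min}, v_{\max}]$ satisfying \eqref{eq:existanceThmStatement}. Taking the contrapositive gives the stated result, that the feasible space is empty if and only if \eqref{eq:existanceThmStatement} holds for no admissible $v_i$; both directions follow at once once the decoupling is established. The main obstacle I anticipate is not the algebra but justifying this decoupling rigorously, specifically confirming that $\dot{E}_i$ is genuinely independent of $\omega_i$ (which rests on $E_i$ having no explicit $\theta_i$ dependence) and that the instantaneous heading and turning-rate constraints can always be met for any choice of $v_i$. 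Once these points are pinned down, the feasibility question collapses to the scalar inequality in $v_i$ and the theorem follows directly.
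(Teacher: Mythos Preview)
Your proposal is correct and follows essentially the same route as the paper: expand $\dot{E}_i$ via the chain rule along the unicycle dynamics, rearrange to obtain \eqref{eq:existanceThmStatement}, and then observe that feasibility reduces to the existence of an admissible $v_i$. If anything, your write-up is more explicit than the paper's about why $\omega_i$ decouples and why $\omega_i=0$ together with the standing hypothesis $|\theta_i-\theta_g|\leq\epsilon$ handles the remaining constraints, which the paper leaves implicit.
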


\begin{proof}
Any solution to Problem \ref{prb:optCtrl} must satisfy the inequality constraint $\dot{E}_i(\bm{p}_i, t) \leq \rho$, which expands to,
\begin{equation}
    \dot{E}_i = \frac{\partial E_i}{\partial t} + \frac{\partial E_i}{\partial \bm{p}_i} v_i
    \begin{bmatrix}
    \cos{\theta_i} \\ \sin{\theta_i}
    \end{bmatrix} \leq \rho,
\end{equation}
by the definition of the full derivative.
Rearranging terms implies,
\begin{equation} \label{eq:existanceThmProof}
    v_i
    \frac{\partial E_i}{\partial \bm{p}_i}\begin{bmatrix}
    \cos{\theta_i} \\ \sin{\theta_i}
    \end{bmatrix}
    \leq  \rho - \frac{\partial E_i}{\partial t},
\end{equation}
which is equal to \eqref{eq:existanceThmStatement}.
Next, any solution to Problem \ref{prb:optCtrl} must also satisfy the bounds,
\begin{equation} \label{eq:thetaBoundProof}
    \quad v_{\min} \leq v_i \leq v_{\max}.
\end{equation}
Thus, if \eqref{eq:existanceThmProof} and \eqref{eq:thetaBoundProof} cannot be jointly satisfied, then the feasible space of Problem \ref{prb:optCtrl} is empty.
Similarly, if any control action $v_i^*$ jointly satisfies \eqref{eq:existanceThmProof} and \eqref{eq:thetaBoundProof}, then $v_i^*$ is a feasible solution of Problem \ref{prb:optCtrl}.
\end{proof}

Intuitively, Theorem \ref{thm:feasible} does not allow UAV $i$ to move in a direction that locally decreases the scalar cost field $E_i$ by less than $|\rho + \frac{\partial E_i}{\partial t}|$.
At each time instant the values of $L_{\bm{f}}E_i$ and $\frac{\partial E_i}{\partial t}$ are fixed by the state of UAV $i$ and the environment, and the UAV must select a value of $v_i$ that $E_i$ is locally decreasing.
In this context, one can interpret $v_i$ as a step size that must overcome the disturbances introduced through $\frac{\partial E_i}{\partial t}$.

Theorem \ref{thm:feasible} describes the conditions where the cost-minimizing constraint is incompatible with the velocity bounds of the agent.
This is closely related to the constraint compatibility problem in the control barrier functions literature, and it is well-studied in the set-theoretic control community \cite{Wang2017SafetySystems,Xiao2022SufficientFunctions,Ibuki2020Optimization-BasedBodies,Beaver2022Constraint-DrivenAvoidance}.
Generally, the problem of constraint incompatibility has been solved in the ecologically-inspired robotics literature by introducing slack variables \cite{Egerstedt2018RobotAutonomy,Ibuki2020Optimization-BasedBodies,Egerstedt2021RobotAutonomy}.
However, this is not fundamentally different from moving the constraint into the objective function to make it ``soft."
A foundational paper in multi-agent control barrier functions proposed operating the system in two modes \cite{Wang2017SafetySystems}: 1) a nominal mode where the agents solve the optimal control problem, and 2) a ``safe mode" where the agents come to a stop when the feasible space is empty.
We take this approach to its logical conclusion -- when the feasible space of Problem \ref{prb:optCtrl} is empty, the controller switches modes and solves a relaxed version of the problem.
This implies an equivalent switched system \cite{Beaver2022Constraint-DrivenAvoidance}, where the UAV uses the premise of Theorem \ref{thm:feasible} to switch between the full and relaxed problem, which is illustrated in Fig. \ref{fig:stateMachine}.

\begin{figure}[ht]
    \centering
    \includegraphics[width=0.7\linewidth]{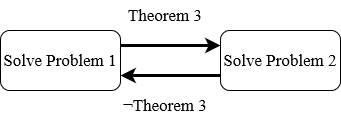}
    \caption{The behavior of each UAV visualized as a switched system. Theorem \ref{thm:feasible} determines when the UAV should solve the original or the relaxed optimal control problem.}
    \label{fig:stateMachine}
\end{figure}

When Problem \ref{prb:optCtrl} has no feasible solution, it is unreasonable to relax the constraints corresponding to the dynamics or control bounds. The only reasonable options are to relax $\dot{E}_i \leq \rho$ or $|\theta_i - \theta_g| \leq \epsilon$.
We propose that the former should be relaxed to maintain Assumption \ref{smp:main}.
Next, we present the relaxed optimal control problem with Problem \ref{prb:relaxed}, followed by a result that shows such a relaxation only lasts for a finite time interval.

\begin{problem} \label{prb:relaxed}
Each UAV $i\in\mathcal{A}$ takes the optimal control input that optimizes,
\begin{align*}
\min_{v_i, \omega_i} & \Bigg\{ \Big(\frac{v_i - v_i^*}{v_{\max} - v_{\min}}\Big)^2 + \Big(\frac{\omega_i}{\omega_{\max}}\Big)^2 \Bigg\} \\
\text{subject to: } &
\eqref{eq:dynamics}, \eqref{eq:bounds}, \\
&|\theta_i - {\theta}_g| \leq \epsilon,
\end{align*}
where $\theta_g$ is the global heading angle (Assumption \ref{smp:main}).
\end{problem}

If the state UAV $i$ satisfies $|\theta_i - \theta_g| \leq \epsilon$, it is trivial to show that the control action
\begin{equation}
\begin{aligned}
    v_i &= \max\big\{\min\big\{ v^*, v_{\max}   \big\}, v_{\min} \big\},\\
    \omega_i &= 0,
\end{aligned}
\end{equation}
is the optimal solution to Problem \ref{prb:relaxed}.
Note that the solution to Problem \ref{prb:relaxed} corresponds to the unstable case presented in Lemma \ref{lma:unstableUpwash}, where the UAV greedily matches the optimal upwash speed $v^*$.
This mode of operation corresponds to a ``safe mode" that retains some structure of the original problem, i.e., it doesn't require the UAVs to come to a complete stop, without guaranteeing formation flight.
Next, we prove that each UAV will only need to solve Problem \ref{prb:relaxed} for a finite amount of time before the premise of Theorem \ref{thm:equilibrium} is no longer satisfied.

\begin{corollary} \label{cor:finiteTime}
Any ``following'' UAV will only solve Problem \ref{prb:relaxed} for a finite interval of time before the premise of Theorem \ref{thm:feasible} is not satisfied.
\end{corollary}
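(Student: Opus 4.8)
The plan is to reduce the corollary to Lemma~\ref{lma:unstableUpwash}. As observed immediately before the statement, the optimizer of the relaxed Problem~\ref{prb:relaxed} is the greedy speed-matching law $v_i = \max\{\min\{v^*,v_{\max}\},v_{\min}\}$ with $\omega_i = 0$. For a following UAV, whose drag-minimizing speed is physically interior to the admissible band, this reduces to flying at exactly $v_i = v_i^*$ while the retained constraint pins $\theta_i$ near $\theta_g$. First I would observe that this is precisely the scenario of Lemma~\ref{lma:unstableUpwash}: each UAV tracks its own optimal airspeed under a common heading.

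Second, I would invoke the defining property of a ``following'' UAV, namely that it occupies the wake of a leader $j$ with a non-negligible upwash benefit $W_i > 0$. By Lemma~\ref{lma:upwashSpeed} this depresses its optimal airspeed below the leader's, $v_i^* < v_j^*$, so Lemma~\ref{lma:unstableUpwash} applies directly and guarantees that the follower loses all upwash benefit after a finite interval: the streamwise speed deficit drives $\|\bm{s}_{ij}(\bm{p}_i,t)\|$ outward until UAV $i$ leaves the wake. Because the interaction magnitude factors through the Gaussian $g$ of~\eqref{eq:streamwise}, the quantities $W_i$, $M_i$, $E_i$, and with them $L_{\bm{f}_i}E_i$ and $\partial E_i/\partial t$, all collapse toward zero as the follower exits. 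At that point the leader's influence is negligible, so the hypothesis under which the relaxed problem was invoked no longer holds and UAV $i$ leaves safe mode, which delivers the finite-time conclusion.

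The step I expect to be the main obstacle is certifying that the emptiness condition of Theorem~\ref{thm:feasible} genuinely ceases to hold at or before the moment the upwash becomes negligible, rather than merely that the wake decays. Along the drift one has $L_{\bm{f}_i}E_i \to 0$ and $\partial E_i/\partial t \to 0$, so the infeasibility test $v_{\min}\,L_{\bm{f}_i}E_i > \rho - \partial E_i/\partial t$ degenerates to $0 > \rho$ in the strict limit; for $\rho < 0$ this would keep the set empty in the limit, so one must instead exhibit an earlier instant at which the sign structure of $L_{\bm{f}_i}E_i$ along the forward direction permits $\dot E_i \le \rho$, using a quantitative decay estimate on the Gaussian tail rather than the bare limit. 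Equivalently, read against the prose preceding the corollary, the relevant hypothesis is the non-negligibility premise of Theorem~\ref{thm:equilibrium}, which fails immediately once Lemma~\ref{lma:unstableUpwash} empties the wake, closing the argument with no $\rho$-dependent bookkeeping. I would also record two side conditions: that the follower is in the benefit rather than cost regime so that Lemma~\ref{lma:unstableUpwash} is applicable, and that in the clamped case $v_i^* \notin [v_{\min},v_{\max}]$ the enforced speed mismatch only enlarges the streamwise deficit, so the drift, and hence the finite-time exit, persists.
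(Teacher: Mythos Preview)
Your proposal is correct and follows essentially the same route as the paper: invoke Lemmas~\ref{lma:upwashSpeed} and~\ref{lma:unstableUpwash} to argue that under the relaxed controller a following UAV drifts aft until $W_i=0$, whence $\dot E_i=0$ for every admissible action and the infeasibility criterion of Theorem~\ref{thm:feasible} can no longer hold. Your extra care about the $\rho<0$ edge case goes beyond what the paper records; the published proof simply stops at ``$\dot E=0$ for all control actions'' (implicitly taking $\rho=0$, as in all of the paper's experiments), so your side discussion is in fact more thorough than the original.
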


\begin{proof}
Corollary \ref{cor:finiteTime} follows trivially from the proof Lemmas \ref{lma:upwashSpeed} and \ref{lma:unstableUpwash}; any following UAV $i$ will have a higher upwash benefit, and thus lower optimal airspeed, than the lead UAV.
This implies that the following UAV satisfies $W_i = 0$ after a finite amount of time, which implies that $\dot{E} = 0$ for all control actions.
\end{proof}

Thus, we have introduced Theorem \ref{thm:feasible}, which gives necessary and sufficient conditions for the system to switch between the flocking controller (Problem \ref{prb:optCtrl}) and the safe controller (Problem \ref{prb:relaxed}).
Furthermore, Corollary \ref{cor:finiteTime} demonstrates that the UAVs will spend a finite amount of time solving the relaxed control problem.



\subsection{On Heterogeneity} \label{sec:hetero}

One compelling result of Lemma \ref{lma:upwashSpeed} is the fact that introducing heterogeneity has the potential to further improve the fuel consumption of all UAVs in the flock.
It demonstrates that any UAV gaining an upwash benefit has a lower optimal speed than when flying in isolation, i.e., each $v_i^*$ decreases as $W_i$ increases for each UAV $i\in\mathcal{A}$.
However, each UAV must match the speed of the UAV in front of it to maintain the formation.
This implies that the flock will inevitably fly at the speed of the frontmost agent, with a speed that is significantly higher than $v_i^*$ of the following agent $i$.
However, if heterogeneity is introduced to the flock--through aerodynamic design, payload weight, or fuel consumption--then the slowest agents can be placed at the front of the flock.
This enables the faster agents in the rear, who receive significantly larger upwash benefits, to fly slower and have a steady-state speed closer to their optimal airspeed.
In this way, heterogineity in the optimal airspeed of the UAVs can have benefits across the entire flock.

\section{Experimental Validation} \label{sec:validation}

Next, we present two simulations and a physical experiment using our proposed flocking controllers.
The first simulation is for $N=2$ UAVs in Matlab, and it demonstrates that pairs of agents converge to the equilibrium points described by Theorem \ref{thm:equilibrium}.
Next, we demonstrate a Python-based simulation that uses the CrazySwarm Python API \cite{crazyswarm} for $N=11$ UAVs; this demonstrates that our proposed flocking controller is platform independent, and that the controller scales to larger numbers of agents.
Finally, we run the Python controller in real-time on a fleet of $N=5$ CrazyFlie quadrotors.
The quadrotors are a surrogate for fixed-wing UAVs, and with them we demonstrate that the control algorithm easily runs in real time--including the time it takes to simulate the aerodynamic interactions between agents.

\subsection{Implementation Issues} \label{sec:implementation}

We foresee two major issues when solving Problem \ref{prb:optCtrl} on a real system.
The first is a sensing issue, namely determining the value of $M_i$ and $W_i$ for each UAV $i\in\mathcal{A}$.
We simulate $M_i$ and $W_i$ explicitly in our results, but for a real fixed-wing UAV we proposed a solution that relies on Assumption \ref{smp:tracking}, i.e., that the UAV is equipped with a low-level tracking controller.
At the tracking level, the upwash force and induced roll act as disturbances on the UAV.
By monitoring the roll and pitch signals, we conjecture that the corresponding upwash force and induced tendency to roll can be inferred from the flight controller.
In other words, our proposed anseroid controller is an outer loop that samples the low-level flight controller data.

The second issue is information-theoretic, namely calculating the time derivative of $E_i$ for each UAV $i\in\mathcal{A}$.
The gradient of the upwash field could possibly be predicted using an aerodynamic model, however this does not capture the time-varying component,
\begin{equation}
    \dot{E} = \frac{\partial E}{\partial t} + \frac{\partial E}{\partial \bm{p}_i} \dot{\bm{p}}_i,
\end{equation}
where the non-smooth points in $\frac{d E}{d \mathbf{p}}$ can be handled with differential inclusions \cite{Santos2019DecentralizedFunctions}.
The main problem is that UAV $i$ must also have knowledge of $v_j$ to compute $\frac{\partial E_i}{\partial t}$.
Similarly, UAV $j$ must have knowledge of $v_i$ to compute $\frac{\partial E_j}{\partial t}$; this is the fundamental problem of simultaneous actions in decentralized control.
One popular approach is to use a consensus protocol to estimate $\frac{\partial E}{\partial t}$, e.g., ADMM \cite{Summers2012DistributedMultipliers}, which has shown some success in the flocking literature \cite{Lyu2019MultivehicleControl}.
Alternatively, it is possible for the agents to iteratively generate trajectories to converge on a locally optimal control strategy, \cite{Zhan2013FlockingMeasurements,Yuan2017OutdoorControl}.
To minimize the iterative and communication cost, we instead assume that for UAV $i$, the signal $v_j(t)$ is a constant for all $j\in\mathcal{A}\setminus\{i\}$.
As the UAVs form a V shape, their speed will eventually converge to the shape of the formation.
Thus, we expect the error that arises from the constant-speed assumption to asymptotically converge to zero \cite{mahbub2022ACC}.
We demonstrate that the system, using this update scheme, approaches a steady V formation in the following subsections.\footnote{Videos of the experiments and simulation files are available at \url{https://sites.google.com/view/ud-ids-lab/anseroids}.}
Note that $\rho=0$ for all of our results, and this demonstrates that the model mismatch introduced by our scheme is sufficient to ensure that the trivial control policy $v_i = v_j$ for all $i,j\in\mathcal{A}$ does not emerge unless $E_i \approx 0$ or $E_j \approx 0$.

\subsection{Matlab Simulation} \label{sec:matlab}

To demonstrate the behavior of our control algorithm, we apply our proposed constraint-driven control scheme to a system of $N=2$ UAVs Using data for the RQ-11 Raven
\cite{AeroVironment2021RavenRQ-11B}.
The Raven weighs $18.7$ N, with a $1.4$ m wing span and a nominal speed of $12$ m/s.
Approximating the density of air as $\rho = 1.2$ kg/m$^3$ yields the circulation from the Kutta-Joukowski theorem \cite{Anderson2017FundamentalsAerodynamics},
\begin{equation}
    \Gamma = \frac{L}{2b \rho u} = \frac{18.7 \text{ N}}{1.4\text{ m } 1.2\text{ kg/m}^3~ 9 \text{ m/s}} = 1.24 \text{ m}^2\text{/s}.
\end{equation}
We expect $r^*$ to be $\approx 5$\% of the span length (e.g., \cite{Yang2016LoveControl} uses 5.4\%); thus we select $r^* = 0.054$ m.
This implies,
\begin{equation}
    \Omega = \frac{\Gamma}{2\pi(r^*)^2} \approx 70 \text{ s}^{-2}.
\end{equation}

Next we estimate the drag constants in \eqref{eq:drag}.
Assuming steady flight, the induced drag satisfies \cite{Anderson2017FundamentalsAerodynamics},
\begin{equation}
    C_2 = \frac{L^2}{2 \rho \pi b^2 } \approx 95 \text{ N}\cdot\text{m/s}^2.
\end{equation}
Assuming the cruising speed of $12$ m/s is designed to be near-optimal for the Raven,
\begin{equation}
    C_1 = \frac{C_2}{(v_i^*)^4} = 5\times10^{-3} \text{ N}\cdot{s/m}^2.
\end{equation}
Note that this implies a drag force of $0.72$ N at cruising speed, and a dimensionless drag coefficient of $C_d \approx 0.04$, which is not unreasonable for a wing.

We initialize the UAVs in a line along the spanwise axis, with an initial orientation of $\theta_G = 0$ and a center-to-center spacing of $2 b = 1.4$ m.
We present a sequence of simulations in Fig. \ref{fig:simSnapshot}, which demonstrates how two agents quickly fall into a leader and follower position, and that the following agent moves to and remains at the point where the flocking cost is minimized.

\begin{figure*}[ht]
    \centering
    \includegraphics[width=0.32\linewidth]{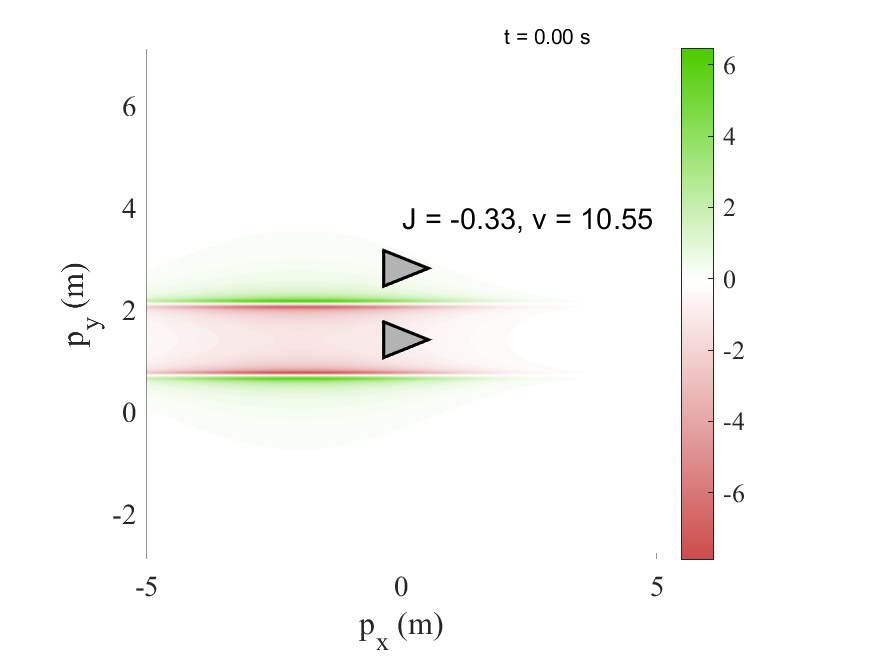}
    \includegraphics[width=0.32\linewidth]{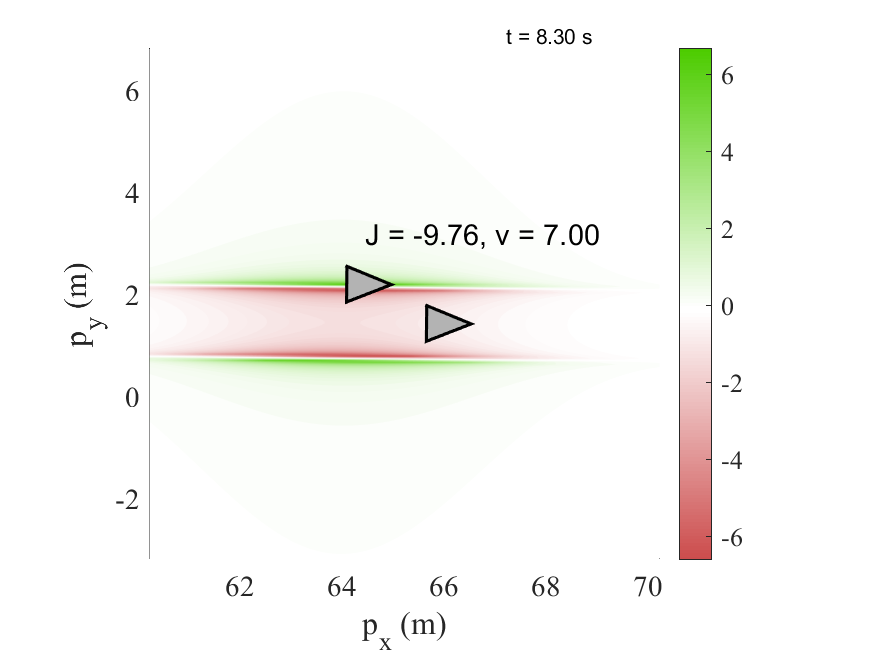}
    \includegraphics[width=0.32\linewidth]{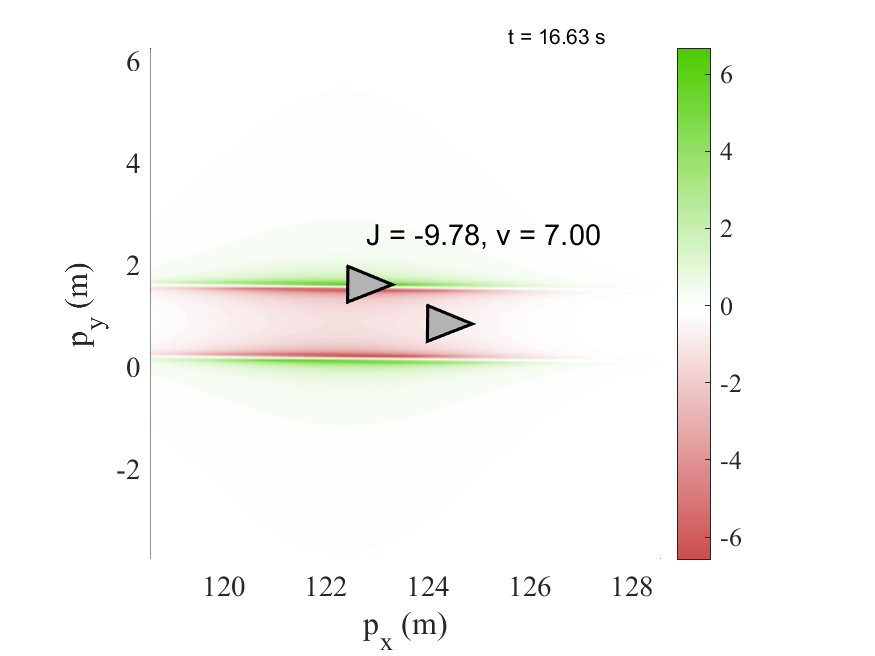}
    \caption{A sequence of simulation snapshots over $20$ seconds for $N=2$ UAVs initialized in a line.
    The annotation shows the instantaneous cost and speed of the top (following) UAV; the contours show the wake of the bottom (lead) UAV.}
    \label{fig:simSnapshot}
\end{figure*}

To quantify the impact of the V formation on the pair of UAVs, we selected $\kappa = 0.25$ and calculated the cost functional $E_i$ for each UAV $i\in\mathcal{A}$.
We present the total cost, the maximum cost, minimum cost, and terminal cost for each UAV in Table \ref{tab:cost}.
Note that a cost of zero corresponds to the agent flying in isolation; this demonstrates that the resulting formation yields a significant energy savings for each UAV.
The instantaneous cost is presented in Fig. \ref{fig:costRange}, which shows the cost experienced the agents at each time-instant of the simulation.
Finally, we can see that the cost remains relatively constant for the last $10$ seconds of the simulation, implying the UAVs have reached a steady-state configuration.

\begin{table}[ht]
    \centering
    \begin{tabular}{rcccc}
        &  Front UAV & Rear UAV  \\\toprule
        Total Cost & -35.2 & -155.2  \\
        Maximum Cost & -0.3 & -0.3  \\
        Minimum Cost & -2.1 & -9.8
    \end{tabular}
    \caption{The impact of our anseroid controller on the cost of each UAV over the entire simulation; a value of $0$ is equivalent to flying in isolation.}
    \label{tab:cost}
\end{table}

\begin{figure}[ht]
    \centering
    \includegraphics[width=\linewidth]{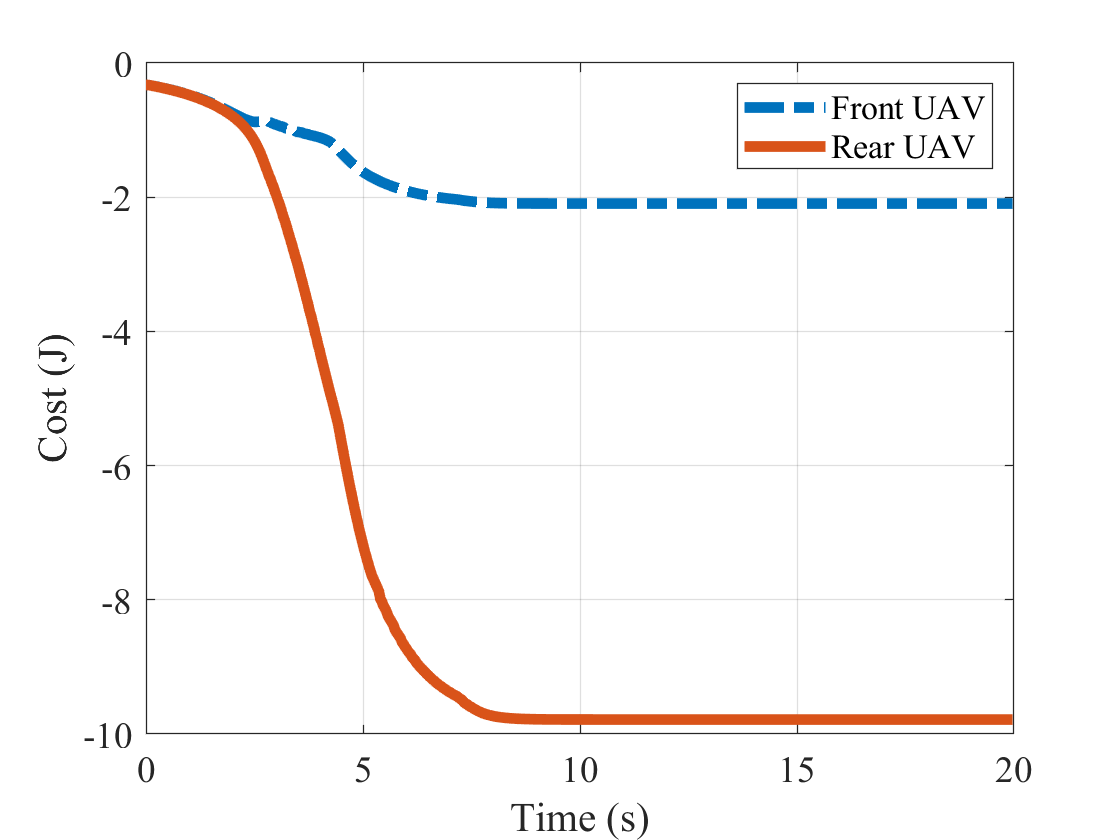}
    \caption{Cost experienced by the front and rear UAVs over the duration of the Matlab simulation. A cost of zero corresponds to flying in isolation.}
    \label{fig:costRange}
\end{figure}

\subsection{Crazy Swarm} \label{sec:experiment}

As a next step toward real-time control of physical UAVs, we developed a decentralized flocking controller using the Crazyswarm library \cite{crazyswarm}.
Crazyswarm is a software library for controlling fleets of Crazyfly quadrotors, and this required us to simulate the aerodynamics of the UAVs to determine the control actions.
Due to the smaller size of the UAVs and the experimental workspace, we scaled the aerodynmic parameters i.e., $\Gamma, \Omega, L$, by $100$ relative to those presented in Section \ref{sec:matlab}, and we present the remaining parameters in Table \ref{tab:parameters}.
To simulate the aerodynamics in real time, we solved the integral \eqref{eq:force} analytically and set $\kappa = 0$.
The simulation result for $N=11$ UAVs is presented in Fig. \ref{fig:crazyswarmSim}.
The front UAV travels almost $35$ m over approximately $60s$, and the formation begins to emerge between $10$ and $15$ m.
This demonstrates that the emergence of a V formation from our controller is independent of the software platform used to implement our algorithm.

\begin{table}[h]
    \centering
    \begin{tabular}{ccccccc}
        $v_{\min}$  & $v_{\max}$  & $w_{\min}$ & $w_{\max}$ & b      & $C_1$ & $C_2$ \\ \toprule
        0.01 m/s    &     2 m/s   &  -1 rad/s  &    1 rad/s & 0.2 m  &    160000   &   5000
    \end{tabular}
    \caption{Parameters used for the Crazyswarm simulation and experiments.}
    \label{tab:parameters}
\end{table}

\begin{figure}[ht]
    \centering
    \includegraphics[width=\linewidth]{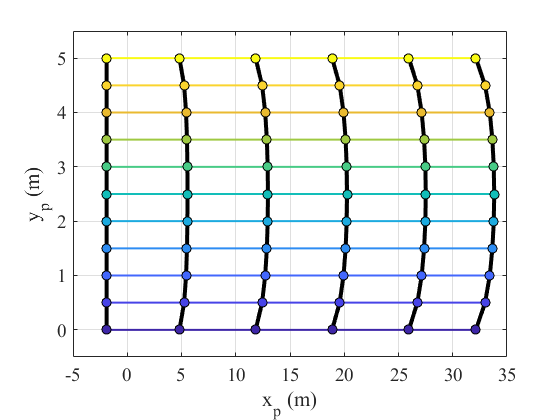}
    \caption{Simulation for $N=11$ UAVs over $60$ seconds in CrazySwarm. The agents are moving from left to right; colored dots and lines correspond to individual UAVs, while the black lines shows the flock shape at different time instants.}
    \label{fig:crazyswarmSim}
\end{figure}

We deployed the same code on a fleet of $N=5$ CrazyFlie quadrotors in our experimental testbed \cite{chalaki2021CSM}.
Quadrotors do not have the same aerodynamic interactions as fixed-wing vehicles, and as a consequence we simulated the upwash field $W_i$ for each $i\in\mathcal{A}$ as part of the control loop.
To generate the control actions, we solved Problem \ref{prb:optCtrl} for each UAV using the SciPy optimization package\footnote{\url{ https://docs.scipy.org/doc/scipy/reference/optimize.html}}.
If the resulting solution violates any constraints, we subsequently solve the relaxed Problem \ref{prb:relaxed} to determine the control input.
Calculating the optimal control action of all five agents took an average of $11$ ms, with a maximum time of $22$ ms, on our centralized computer (i7-6950X CPU @ 3.00 GHz); furthermore, state updates were performed using a VICON motion capture system operating at 100 Hz.

Snapshots of the experiment are presented in Fig. \ref{fig:experiment}; this demonstrates that our control algorithm is real-time implementable and leads to an emergent V formation under the strict time and space constraints present in a laboratory environment.
The agents are initialized in an approximately vertical line and spaced $0.5$ m apart.
The agents at the edge of the formation quickly fall back, and a V formation can be seen to emerge when the front agent has moved only $0.5$ m.
The structure of the  V formation emerges around $2$ m as the agents approach the boundary of our testbed.

\begin{figure*}[ht]
    \centering
    \includegraphics[width=0.32\linewidth]{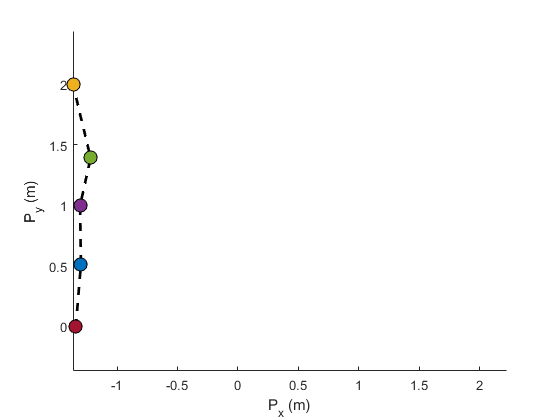}
    \includegraphics[width=0.32\linewidth]{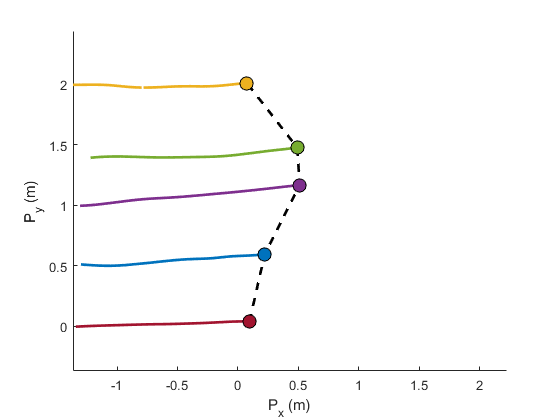}
    \includegraphics[width=0.32\linewidth]{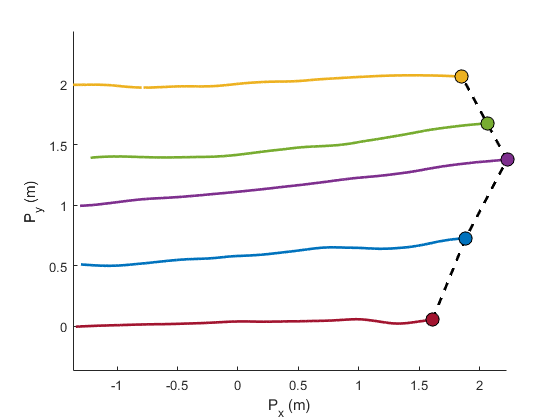}
    \caption{Emergent V formation for $N=5$ Crazyflie drones using the anseroid controller in CrazySwarm over $t=8$ s. The agent positions are shown at the initial time (left), half way through the experiment (center), and the end of the experiment (right).}
    \label{fig:experiment}
\end{figure*}

\section{Conclusions} \label{sec:conclusion}

In this article we presented, for the first time, an anseroid behavior that generates emergent V formations for flocks of UAVs.
First, we demonstrated that a greedy energy-minimizing approach is unstable and cannot lead to a stable V formation, and we proposed a constraint-driven approach where agents maximize their upwash benefits.
We proved for a simplified system that this energy-optimizing control leads to emergent flocking, and that the agents are driven to equlibria that resemble a V or echelon formation.
Finally, we demonstrated the performance of our approach through matlab and python simulations, as well as a real-time experiment with five crazyflie drones.

The results of this article present several compelling directions for future work.
First, analyzing the stability of the V formation to disturbances--including the addition and removal of agents, as well as noise, errors and delays--is critical to understand real-world performance of our proposed control algorithm.
Determining the parameter ranges for the wingspan, aerodynamic properties, weight, and energy factor $\kappa$ that make flocking feasible, i.e., satisfy Theorem \ref{thm:equilibrium}, is another interesting research direction.
Relaxing Assumption \ref{smp:main} using a computational fluid dynamics model may yield interesting insights, as could experiments with real fixed-wing aircraft to infer the upwash force and tendancy to troll from the low-level flight controller.
Finally, exploring leader-switching strategies, and incorporating heterogeneity--through payload mass or fuel quantity--is a compelling research direction for more realistic multi-agent and multi-target missions.

\bibliography{mendeley,IDS_Pubs}

\end{document}